\newif\iffinal\finaltrue 
\newif\ifdraft\finalfalse 
\newcommand{\by}{\@ifstar%
  \BYstar%
  \BYnoStar%
}
\newcommand{\BYstar}[2]{#1\nobreakdash-by\nobreakdash-#2}
\newcommand{\BYnoStar}[2]{\ensuremath{#1\text{\nobreakdash-by\nobreakdash-}#2}}
	\DeclareMathOperator*{\argmax}{arg\, max}
	\DeclareMathOperator*{\argmin}{arg\, min}
	\DeclareMathOperator*{\loglog}{log\, log}
\providecommand{\note}[1]{{{\color{red} #1}}}
    \newcommand{\note}[1]{{}}%
    \renewcommand{\note}[1]{{}}%
    \newcommand{\note}[2][red]{{{\color{#1} #2}}}%
    \renewcommand{\note}[2][red]{{{\color{#1} #2}}}%
\definecolor{indigo(web)}{rgb}{0.29, 0.0, 0.51}
\definecolor{internationalorange}{rgb}{1.0, 0.31, 0.0}
\definecolor{green(ryb)}{rgb}{0.4, 0.69, 0.2}
\definecolor{richelectricblue}{rgb}{0.03, 0.57, 0.82}
\definecolor{goldenpoppy}{rgb}{0.99, 0.76, 0.0}
\definecolor{crimson}{rgb}{0.86, 0.08, 0.24}
\definecolor{airforceblue}{rgb}{0.36, 0.54, 0.66}
\definecolor{yafcolor1}{rgb}{0.4, 0.165, 0.553}
\definecolor{yafcolor2}{rgb}{0.949, 0.482, 0.216}
\definecolor{yafcolor3}{rgb}{0.47, 0.549, 0.306}
\definecolor{yafcolor4}{rgb}{0.925, 0.165, 0.224}
\definecolor{yafcolor5}{rgb}{0.141, 0.345, 0.643}
\definecolor{yafcolor6}{rgb}{0.965, 0.933, 0.267}
\definecolor{yafcolor7}{rgb}{0.627, 0.118, 0.165}
\definecolor{yafcolor8}{rgb}{0.878, 0.475, 0.686}
\colorlet{mambacolor1}{indigo(web)}
\colorlet{mambacolor2}{internationalorange}
\colorlet{mambacolor3}{green(ryb)}
\colorlet{mambacolor4}{richelectricblue}
\colorlet{mambacolor5}{goldenpoppy}
\colorlet{mambacolor6}{crimson}
\colorlet{mambacolor7}{airforceblue}
\pgfplotsset{
	%
	/tikz/normal shift/.code 2 args = {%
		\pgftransformshift{%
			\pgfpointscale{#2}{\pgfplotspointouternormalvectorofticklabelaxis{#1}}%
		}%
	},%
	eda line/.style={
		no markers,
		cycle list name		= mamba,
		tick align        	= outside,
		scaled ticks      	= false,
		enlargelimits     	= false,
		ticklabel shift   	= {10pt},
		axis lines*       	= left,
		line cap          	= round,
		clip              	= false,
		tick style    		= {thin, black, major tick length=2pt},
		x tick label style 	= {font=\scriptsize, yshift = 1pt},
		y tick label style 	= {font=\scriptsize, xshift = 1pt},
		xtick style       	= {normal shift={x}{10pt}},
		ytick style       	= {normal shift={y}{10pt}},
		x axis line style 	= {thick,normal shift={x}{10pt}},
		y axis line style 	= {thick,normal shift={y}{10pt}},
		x label style 		= {at={(axis description cs:0.5,0)}, normal shift={x}{16pt}, anchor=north, font=\scriptsize},
		y label style 		= {at={(axis description cs:0,0.5)}, normal shift={y}{24pt}, anchor=south, font=\scriptsize},
		legend cell align 	= left,
		legend style 		= {inner sep = 1pt, cells = {font=\scriptsize}, },
		legend image code/.code={%
			\draw[mark repeat=2,mark phase=2,#1] 
			plot coordinates { (0cm,0cm) (0.15cm,0cm) (0.3cm,0cm) };%
		}
	}
}
\pgfplotsset{
	eda ybar/.style={
		ybar,
		area legend,
		ymajorgrids,
		no markers,
		axis on top,
		xtick				= data,
		cycle list name    	= mamba-bar,
		tick align        	= outside,
		enlargelimits     	= false,
		xmajorgrids 		= false,
		bar width			= 0.7em,
		major grid style	= white,
		axis lines* 		= left,			
		tick style    		= {thin, black, major tick length=2pt},
		major y tick style	= {draw=none},
		x tick label style 	= {font=\scriptsize, yshift=1pt},
		y tick label style = {font=\scriptsize, xshift=1pt},
		x axis line style 	= {thick, normal shift={x}{0pt}},
		y axis line style	= {opacity=0},	
		x label style 		= {at={(axis description cs:0.5,0)}, normal shift={x}{6pt}, anchor=north, font=\scriptsize},
		y label style 		= {at={(axis description cs:0,0.5)}, normal shift={y}{14pt}, anchor=south, font=\scriptsize},
		legend image post style={scale=0.25},
		legend style 		= {inner sep=1pt, cells={font=\scriptsize}, },
		legend cell align 	= left
	}
}
\pgfplotsset{
	eda ybar log/.style={
		eda ybar,
		area legend,
		ymajorgrids,
		no markers,
		axis on top,
		xtick				= data,
		cycle list name    	= mamba-bar,
		tick align        	= outside,
		enlargelimits     	= false,
		xmajorgrids 		= false,
		bar width			= 0.7em,
		major grid style	= white,
		axis lines* 		= left,			
		tick style    		= {thin, black, major tick length=2pt},
		minor y tick style	= {draw=none},
		major y tick style	= {draw=none},
		x tick label style 	= {font=\scriptsize, yshift=1pt},
		y tick label style = {font=\scriptsize, xshift=1pt},
		x axis line style 	= {thick, normal shift={x}{0pt}},
		y axis line style	= {opacity=0},	
		x label style 		= {at={(axis description cs:0.5,0)}, normal shift={x}{10pt}, anchor=north, font=\scriptsize},
		y label style 		= {at={(axis description cs:0,0.5)}, normal shift={y}{4pt}, anchor=south, font=\scriptsize},
		legend image post style={scale=0.25},
		legend style 		= {inner sep=1pt, cells={font=\scriptsize}, },
		legend cell align 	= left
	}
}
\pgfplotsset {
	eda scatter3/.style={
		only marks,
		mark				= *,
		cycle list name		= mamba,
		scaled ticks      	= false,
		enlargelimits     	= false,
		axis lines*			= left,
		mark size			= 0.75pt,
		tick pos 			= left,
		tick align			= outside,
		ticklabel shift   	= {10pt},
		clip              	= false,
		tick style    		= {thin, black, major tick length=2pt},
		xtick style       	= {normal shift={x}{10pt}},
		ytick style       	= {normal shift={y}{10pt}},
		x tick label style 	= {font=\scriptsize, yshift = 1pt},
		y tick label style 	= {font=\scriptsize, xshift = 1pt},
		x axis line style 	= {thick, normal shift={x}{10pt}},
		y axis line style 	= {thick,normal shift={y}{10pt}},
		x label style 		= {at={(axis description cs:0.5,0)}, normal shift={x}{0pt}, anchor=north, font=\scriptsize},
		y label style 		= {at={(axis description cs:0,0.5)}, normal shift={y}{-24pt}, anchor=south, font=\scriptsize},
		scatter/use mapped color={draw=indigo(web),fill=indigo(web)},
		legend cell align 	= left,
		legend style 		= {inner sep = 1pt, cells = {font=\scriptsize}, },
		legend image code/.code={%
			\draw[mark repeat=2,mark phase=2,#1] 
			plot coordinates { (0cm,0cm) (0.15cm,0cm) (0.3cm,0cm) };%
		}
	}
}
\pgfplotsset {
	eda scatter4/.style={
		mark				= *,
		cycle list name		= mamba,
		scaled ticks      	= false,
		enlargelimits     	= false,
		axis lines*			= left,
		mark size			= 0.75pt,
		tick pos 			= left,
		tick align			= outside,
		ticklabel shift   	= {10pt},
		clip              	= false,
		tick style    		= {thin, black, major tick length=2pt},
		xtick style       	= {normal shift={x}{10pt}},
		ytick style       	= {normal shift={y}{10pt}},
		x tick label style 	= {font=\scriptsize, yshift = 1pt},
		y tick label style 	= {font=\scriptsize, xshift = 1pt},
		x axis line style 	= {thick, normal shift={x}{10pt}},
		y axis line style 	= {thick,normal shift={y}{10pt}},
		x label style 		= {at={(axis description cs:0.5,0)}, normal shift={x}{12pt}, anchor=north, font=\scriptsize},
		y label style 		= {at={(axis description cs:0,0.5)}, normal shift={y}{16pt}, anchor=south, font=\scriptsize},
		scatter/use mapped color={draw=indigo(web),fill=indigo(web)},
		legend cell align 	= left,
		legend style 		= {inner sep = 1pt, cells = {font=\scriptsize}, },
		legend image code/.code={%
			\draw[mark repeat=2,mark phase=2,#1] 
			plot coordinates { (0cm,0cm) (0.15cm,0cm) (0.3cm,0cm) };%
		}
	}
}
\pgfplotsset{
    box plot/.style={
        /pgfplots/.cd,
        black,
        only marks,
        mark=-,
        tick style    		= {thin, black, major tick length=2pt},
	major y tick style	= {draw=none},
	x tick label style 	= {font=\scriptsize, yshift=1pt},
	y tick label style = {font=\scriptsize, xshift=-2pt},
	x axis line style 	= {thick, normal shift={x}{0pt}},
	y axis line style	= {opacity=0},	
	x label style 		= {at={(axis description cs:0.5,0)}, normal shift={x}{6pt}, anchor=north, font=\scriptsize},
	y label style 		= {at={(axis description cs:0,0.5)}, normal shift={y}{20pt}, anchor=south, font=\scriptsize},
	xtick pos = left,
	ytick pos = left,
        mark size=\pgfkeysvalueof{/pgfplots/box plot width},
        /pgfplots/error bars/y dir=plus,
        /pgfplots/error bars/y explicit,
        /pgfplots/table/x index=\pgfkeysvalueof{/pgfplots/box plot x index},
    },
    box plot box/.style={
        /pgfplots/error bars/draw error bar/.code 2 args={%
            \draw  ##1 -- ++(\pgfkeysvalueof{/pgfplots/box plot width},0pt) |- ##2 -- ++(-\pgfkeysvalueof{/pgfplots/box plot width},0pt) |- ##1 -- cycle;
        },
        /pgfplots/table/.cd,
        y index=\pgfkeysvalueof{/pgfplots/box plot box top index},
        y error expr={
            \thisrowno{\pgfkeysvalueof{/pgfplots/box plot box bottom index}}
            - \thisrowno{\pgfkeysvalueof{/pgfplots/box plot box top index}}
        },
        /pgfplots/box plot
    },
    box plot top whisker/.style={
        /pgfplots/error bars/draw error bar/.code 2 args={%
            \pgfkeysgetvalue{/pgfplots/error bars/error mark}%
            {\pgfplotserrorbarsmark}%
            \pgfkeysgetvalue{/pgfplots/error bars/error mark options}%
            {\pgfplotserrorbarsmarkopts}%
            \path ##1 -- ##2;
        },
        /pgfplots/table/.cd,
        y index=\pgfkeysvalueof{/pgfplots/box plot whisker top index},
        y error expr={
            \thisrowno{\pgfkeysvalueof{/pgfplots/box plot box top index}}
            - \thisrowno{\pgfkeysvalueof{/pgfplots/box plot whisker top index}}
        },
        /pgfplots/box plot
    },
    box plot bottom whisker/.style={
        /pgfplots/error bars/draw error bar/.code 2 args={%
            \pgfkeysgetvalue{/pgfplots/error bars/error mark}%
            {\pgfplotserrorbarsmark}%
            \pgfkeysgetvalue{/pgfplots/error bars/error mark options}%
            {\pgfplotserrorbarsmarkopts}%
            \path ##1 -- ##2;
        },
        /pgfplots/table/.cd,
        y index=\pgfkeysvalueof{/pgfplots/box plot whisker bottom index},
        y error expr={
            \thisrowno{\pgfkeysvalueof{/pgfplots/box plot box bottom index}}
            - \thisrowno{\pgfkeysvalueof{/pgfplots/box plot whisker bottom index}}
        },
        /pgfplots/box plot
    },
    box plot median/.style={
        /pgfplots/box plot,
        /pgfplots/table/y index=\pgfkeysvalueof{/pgfplots/box plot median index}
    },
    box plot width/.initial=1em,
    box plot x index/.initial=0,
    box plot median index/.initial=1,
    box plot box top index/.initial=2,
    box plot box bottom index/.initial=3,
    box plot whisker top index/.initial=4,
    box plot whisker bottom index/.initial=5,
}
\pgfplotsset{
	eda surf/.style={
		view={56}{26},
		axis lines=left,		
		tick style    		= {thin, black, major tick length=2pt},
		 xmajorgrids, x dir= reverse, ymajorgrids, zmajorgrids,
		minor y tick style	= {draw=none},
		major y tick style	= {draw=none},
		major z tick style	= {draw=none},
		x tick label style 	= {font=\scriptsize, yshift=1pt},
		y tick label style 	= {font=\scriptsize, xshift=-3pt, yshift=3pt},
		z tick label style 	= {font=\scriptsize, xshift=1pt},
		z axis line style 		= {thick, normal shift={x}{0pt}},
		x axis line style		= {opacity=0},	
		y axis line style		= {opacity=0},	
		z label style 		= {at={(axis description cs:-0.15,0.5)}, normal shift={z}{10pt}, anchor=south, font=\scriptsize},
		x label style 		= {at={(axis description cs:-0.15,-0.15)}, anchor=south, rotate=-35, font=\scriptsize},
		y label style 		= {at={(axis description cs:0.85,-0.2)}, rotate=15, anchor=south, font=\scriptsize},
		legend image post style={scale=0.25},
		legend style 		= {inner sep=1pt, cells={font=\scriptsize}, },
		legend cell align 	= left,
		grid=major,
		colormap={reverse hot}{
        			indices of colormap={
	            		\pgfplotscolormaplastindexof{hot},...,0 of hot}
    		}
	}
}
\newif\ifapx
\newif\iflong
\setlist[enumerate]{noitemsep, nolistsep}
\setlist[itemize]{noitemsep, nolistsep}
\newcommand{\oset}[3][0ex]{%
  \mathrel{\mathop{#3}\limits^{
    \vbox to#1{\kern-2\ex@
    \hbox{$\scriptstyle#2$}\vss}}}}
\newcommand{\aone}{\ensuremath{-_1}\xspace}
\newcommand{\atwo}{\ensuremath{-_2}\xspace}
\newcommand{\aletwo}{\ensuremath{-_{\le 2}}\xspace}
\newcommand{\atwos}{\ensuremath{\oset{s}{-}_2}\xspace}
\newcommand{\aletwos}{\ensuremath{\oset{s}{-}_{\le 2}}\xspace}
\DeclarePairedDelimiterX{\infdivx}[2]{(}{)}{%
  #1\;\delimsize\|\;#2%
}
\DeclareMathAlphabet{\mathbcal}{OMS}{cmsy}{b}{n} 
\DeclareMathOperator*{\argmin}{\arg\!\min}
\DeclareMathOperator*{\argmax}{\arg\!\max}
\newcommand{\Models}{\ensuremath{\mathcal{M}}\xspace}
\newcommand{\Indep}{\mathop{\perp\!\!\!\perp}\nolimits} 
\newcommand{\nIndep}{\mathop{\cancel\Indep}\nolimits}
\newcommand{\Pa}{\ensuremath{\text{Pa}}\xspace}
\newcommand{\An}{\ensuremath{\text{An}}\xspace}
\newcommand{\Sp}{\ensuremath{\text{Sp}}\xspace}
\newcommand{\Ch}{\ensuremath{\text{Ch}}\xspace}
\newcommand{\De}{\ensuremath{\text{De}}\xspace}
\newcommand{\Nd}{\ensuremath{\text{Nd}}\xspace}
\newcommand{\PC}{\ensuremath{\text{PC}}\xspace}
\newcommand{\MB}{\ensuremath{\text{MB}}\xspace}
\newcommand{\variables}{\ensuremath{\bm{V}}\xspace}
\newtheorem{corollary}{Corollary}
\newtheorem{assumption}{Assumption}
\newtheorem{definition}{Definition}
\newtheorem{example}{Example}
\newenvironment{proof}{\paragraph{Proof:}}{\hfill$\square$}
\tikzset{node/.style={black, draw=black, circle, minimum size=0.7cm, scale=0.8}} 
\tikzset{dummy/.style={black, draw=black, circle, minimum size=0.55cm, scale=0.7}} 
\tikzset{latent/.style={black, draw=black, fill=lightgray, circle, minimum size=0.55cm, scale=0.7}} 
\tikzset{causes/.style={->,very thick,  color=black}} 
\tikzset{causesxor/.style={->,very thick, dashed, color=black}} 
\tikzset{causesxoro/.style={o->,very thick, dashed, color=black}} 
\tikzset{connected/.style={o-o,very thick, color=black}} 
\tikzset{connectedd/.style={o-o,very thick, dashed,  color=black}} 
\tikzset{ocauses/.style={o->,very thick,  color=black}} 
\tikzset{confounder/.style={<->,very thick,  color=black}} 
\tikzset{confounderxor/.style={<->,very thick, dashed, color=black}} 
\tikzset{confounderl/.style={<->,very thick,  color=black, bend left=45}} 
\tikzset{confounderr/.style={<->,very thick,  color=black, bend right=45}}
\title{A Weaker Faithfulness Assumption based on Triple Interactions}
\author[1,2]{\href{mailto:<alexander.marx@cispa.de>?Subject=Your UAI 2021 paper}{Alexander~Marx}{}}
\author[3]{Arthur~Gretton}
\author[4]{Joris M. Mooij}
\affil[1]{%
    CISPA Helmholtz Center for Information Security\\
    Saarland University\\
    Saarbr{\"u}cken, Germany
}
\affil[2]{%
    Max Planck Institute for Informatics\\
    Saarland University\\
    Saarbr{\"u}cken, Germany
}
\affil[3]{%
    Gatsby Unit\\
    University College London\\
    London, United Kingdom
  }
\affil[4]{%
  Korteweg-de Vries Institute\\
  University of Amsterdam\\
  Amsterdam, The Netherlands
}
\begin{document}

\maketitle

\begin{abstract}
One of the core assumptions in causal discovery is the faithfulness assumption---i.e.~assuming that independencies found in the data are due to separations in the true causal graph. This assumption can, however, be violated in many ways, including xor connections, deterministic functions or cancelling paths. In this work, we propose a weaker assumption that we call $2$-adjacency faithfulness. In contrast to adjacency faithfulness, which assumes that there is no conditional independence between each pair of variables that are connected in the causal graph, we only require no conditional independence between a node and a subset of its Markov blanket that can contain up to two nodes. Equivalently, we adapt orientation faithfulness to this setting.
We further propose a sound orientation rule for causal discovery that applies under weaker assumptions. As a proof of concept, we derive a modified Grow and Shrink algorithm that recovers the Markov blanket of a target node and prove its correctness under strictly weaker assumptions than the standard faithfulness assumption.
\end{abstract}

\section{INTRODUCTION}
\label{sec:introduction}

In this work, we focus on causal discovery from observational data, where we are given a sample from the joint distribution $P$ of the observed variables and try to infer the true causal graph $G$ between them. Two standard assumptions in this field are the causal Markov condition and the faithfulness assumption~\citep{spirtes:00:book}. While the causal Markov condition assumes that all separations in the true causal graph $G$ imply independencies in $P$, the faithfulness assumption is its counterpart.
That is, all independencies found in $P$ are due to separations in $G$. Although both assumptions have great merit for causal discovery algorithms, especially the faithfulness assumption has been criticized in the past~\citep{andersen:13;violationmatters,zhang:16:threefacesofff}.

\begin{figure}[t]%
	\begin{minipage}[t]{.5\linewidth}
		\centering
		\includegraphics[]{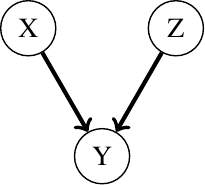}
		\subcaption{}
	\end{minipage}%
	\begin{minipage}[t]{.5\linewidth}
		\centering
		\includegraphics[]{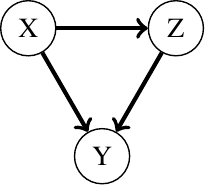}
		\subcaption{}
	\end{minipage}%
	\caption{Failures of adjacency faithfulness: Assume in graph (a) $X,Z$ are fair independent coins and $Y := (X \oplus Z) \oplus E$, where $\oplus$ is the xor operator and $E$ is a biased coin denoting a noise term. Then $X$ is independent of $Y$ (denoted as $X \Indep_P Y$) and $Z \Indep_P Y$. Graph (b) could correspond with a linear model where both directed paths from $X$ to $Y$ cancel 
	\iflong
	s.t.~$X \Indep_P Y$,
	\else
	out such that $X \Indep_P Y$, 
	\fi
	but $X \nIndep_P Z$ and $Z \nIndep_P Y$.}
	\label{fig:adjacency-failure}
\end{figure}

Despite it was proven that faithfulness violations in causally sufficient linear-Gaussian and discrete acyclic systems occur with Lebesgue measure zero~\citep{meek:95:zero}, it has also been shown that on a finite sample, empirical faithfulness violations do appear surprisingly often~\citep{uhler:13:geometry}. Even on population level, there exist simple generating mechanisms, as shown in Figure~\ref{fig:adjacency-failure}, that violate the faithfulness assumption. For instance, two independent random variables $X$ and $Z$, that can be modelled by fair coins, together cause $Y$ through a noisy xor relation. As a consequence, all three variables are marginally independent. Following the faithfulness assumption, there should be no edges connecting $X,Y$ and $Z$ in the causal graph---however, there are. 

Faithfulness violations like the above have been intensively studied in the past~\citep{ramsey:06:cpc,zhang:08:triangle,spirtes:14:ktriangle} and several weaker assumptions such as adjacency faithfulness~\citep{spirtes:00:book}, P-minimality~\citep{pearl:09:causality}, SGS-minimality~\citep{spirtes:00:book} and frugality~\citep{forster:17:frugality}, which we review in Section~\ref{sec:related}, have been proposed. Although faithfulness violations induced by xor-type relations---i.e. both parents are marginally independent of the child node---can be detected by most of the above approaches, they do not analyze under which conditions the DAG structure can be partially recovered, once such violations have been detected.

In this work, we propose a new assumption that we call \textit{$2$-adjacency faithfulness}, which allows us to both detect such faithfulness violations and partially infer the underlying DAG structure under certain conditions. We start by explaining the standard concepts and notation in Section~\ref{sec:preliminaries} and review failures of faithfulness as well as related work in Section~\ref{sec:problem}. Then, we study the causal structure of xor-type connections in Section~\ref{sec:ufts} and propose $2$-adjacency faithfulness in Section~\ref{sec:connections}. To partially infer causal DAGs that may contain such generating mechanisms, we introduce a sound orientation rule, in Section~\ref{sec:orientation}. Further, we show under which assumptions on the distribution this rule is applicable---which we formalize as \textit{$2$-orientation faithfulness}---and analyze its failure cases. As a proof of concept, we 
\iflong
introduce a modification of the Grow and Shrink (GS)
\else
provide a modified Grow and Shrink (GS)
\fi
algorithm~\citep{margaritis:00:gs} in Section~\ref{sec:implementation} and show it correctly identifies the Markov blanket of a target node under strictly weaker assumptions than faithfulness. Besides, we give some intuition on how to extend well-known causal discovery algorithms based on our new assumptions.

\section{DAGS AND INDEPENDENCE}
\label{sec:preliminaries}

In this section, we define the notation and provide definitions for separations on graphs and independence.

\subsection{Causal Graphs}

A causal \textit{directed acyclic graph} (DAG) $G$ over a set of random variables $\variables$ with joint distribution $P$ is defined such that each pair of nodes that is adjacent in $G$ is causally related. For simplicity, we will use the random variables $\variables$ to also refer to the nodes of the graph. A directed edge $X \rightarrow Y$ in $G$ between two nodes representing the random variables $X,Y \in \variables$ indicates that $X$ is a \textit{direct cause} or \textit{parent} of $Y$ and that $Y$ is a \textit{direct effect} or \textit{child} of $X$. Accordingly, we denote the set of all parents of $X \in \variables$ with $\Pa(X)$, the set of all children with $\Ch(X)$ and the set of parents and children with $\PC(X) := \Pa(X) \cup \Ch(X)$. Further, we write $\An(X)$ for the set of \textit{ancestors} and denote its  \textit{descendants} by $\De(X)$, where $X$ is an ancestor and descendant of itself. Respectively, we refer to the \textit{non-descendants} of $X$ as $\Nd(X) := \variables \backslash \De(X)$. Last, the \textit{Markov blanket} of $X$ is defined as  $\MB(X) := \PC(X) \cup \Sp(X)$, where $\Sp(X)$ are the spouses of $X$, that is, nodes that share a child node with $X$. Importantly, $X$ is $d$-separated of any other node in the graph given its Markov blanket and $\MB(X)$ is the smallest such set.

DAGs are used to represent causal graphs under the assumption of acyclicity, no selection bias, and \textit{causal sufficiency}, that is, it is assumed that no two variables $X, Y \in \variables$ are caused by an unobserved confounder $Z \not \in \variables$. This is also the setup on which we focus in this paper---i.e. assuming that all relevant variables are observed, that there are no causal cycles and that there has been no conditioning on selection variables.
Further, as a short form to summarize a model as defined above, we write $\Models = (G, \variables, P)$.

\subsection{Independence and Separation}

In the following, we define conditional independence in a probability distribution and $d$-separation in a graph. 

Given three sets of random variables $\bm{X}, \bm{Y}, \bm{Z} \subseteq \variables$, where $P$ is the joint distribution over $\variables$, we denote that $\bm{X}$ is \textit{probabilistically independent} of $\bm{Y}$ given $\bm{Z}$ in $P$ as $\bm{X} \Indep_P \bm{Y} \mid \bm{Z}$. 

\textit{$D$-separation}~\citep{pearl:09:causality} is defined in terms of paths. 
A \textit{path} $p$ between $X$ and $Y$, denoted $p = \langle X, \dots, Y \rangle$, is a sequence of distinct nodes $X_1, \dots, X_n$ such that $X_i$ is adjacent to $X_{i+1}$ for $i=1, \dots, n-1$, $X_1 = X$ and $X_n =Y$.
Further, we call a node $C$ a \textit{collider} on a path $\langle \dots, X, C, Y, \dots \rangle$, where $C$ is adjacent to both $X$ and $Y$, if two arrowheads point to it, that is $X \rightarrow C \leftarrow Y$.

\begin{definition}[$d$-Separation] 
A path between two vertices $X,Y$ in a DAG is $d$-connecting given a set $\bm{Z}$, if
\begin{enumerate}
	\item every non-collider on the path is not in  $\bm{Z}$, and
	\item every collider on the path is an ancestor of $\bm{Z}$.
\end{enumerate}
If there is no path $d$-connecting $X$ and $Y$ given $\bm{Z}$, then $X$ and $Y$ are $d$-separated given $\bm{Z}$. Sets $\bm{X}$ and $\bm{Y}$ are $d$-separated given $\bm{Z}$, if for every pair $X, Y$, with $X \in \bm{X}$ and $Y \in \bm{Y}$, $X$ and $Y$ are $d$-separated given $\bm{Z}$.
\end{definition}

As shorthand notation for separations on a DAG $G$, we write $\bm{X} \Indep_G \bm{Y} \mid \bm{Z}$ if $\bm{X}$ is $d$-separated from $\bm{Y}$ given $\bm{Z}$.
\iflong
Following this notation, we state the graphoid axioms~\citep{dawid:79:graphoid,spohn:80:stochastic,geiger:90:identifying}.

\begin{definition}[Graphoid Axioms]
Let $\Models = (G, \variables, P)$, with $\bm{W}, \bm{X}, \bm{Y}, \bm{Z} \subseteq \variables$. The (semi-)graphoid axioms are the following rules ($\Indep$ denotes $\Indep_P$ and $\Indep_G$)
\begin{enumerate}
	\item Symmetry: $\bm{X} \Indep \bm{Y} \mid \bm{Z} \Rightarrow \bm{Y} \Indep \bm{X} \mid \bm{Z}$.
	\item Decomposition: $\bm{X} \Indep \bm{Y} \cup \bm{W} \mid \bm{Z} \Rightarrow \bm{X} \Indep \bm{Y} \mid \bm{Z}$.
	\item Weak Union: $\bm{X} \Indep \bm{Y} \cup \bm{W} \mid \bm{Z} \Rightarrow \bm{X} \Indep \bm{Y} \mid \bm{W} \cup \bm{Z}$.
	\item Contraction: $(\bm{X} \Indep \bm{Y} \mid \bm{W} \cup \bm{Z}) \land (\bm{X} \Indep \bm{W} \mid \bm{Z}) \Rightarrow \bm{X} \Indep \bm{Y} \cup \bm{W} \mid \bm{Z}$.
\end{enumerate}
 For separations only on the graph, the graphoid axioms include two additional rules (only for $\Indep_G$).
\begin{enumerate}
	\setcounter{enumi}{4}
	\item Intersection: $(\bm{X} \Indep \bm{Y} \mid \bm{W} \cup \bm{Z}) \land (\bm{X} \Indep \bm{W} \mid \bm{Y} \cup \bm{Z}) \Rightarrow \bm{X} \Indep \bm{Y} \cup \bm{W} \mid \bm{Z}$, for any pairwise disjoint subsets $\bm{W}, \bm{X}, \bm{Y}, \bm{Z} \subseteq \variables$.
	\item Composition: $(\bm{X} \Indep \bm{Y} \mid \bm{Z}) \land (\bm{X} \Indep \bm{W} \mid \bm{Z}) \Rightarrow \bm{X} \Indep \bm{Y} \cup \bm{W} \mid \bm{Z}$.
\end{enumerate}
\end{definition}

As an illustration why certain rules only hold for graphs and not generally for probability distributions, consider rule (6) and Figure~\ref{fig:adjacency-failure}~(a) again. From the distribution induced by the xor, we find that $Y \Indep_P X$ and $Y \Indep_P Z$ but we cannot conclude that $Y \Indep_P \{ X,Z \}$. If, however, in a graph $Y$ is $d$-separated from $X$ and from $Z$ then $Y$ is $d$-separated from the set $\{ X,Z \}$.
\else
A useful set of tools for inferences on graphs and distributions are the graphoid axioms~\citep{dawid:79:graphoid,spohn:80:stochastic,geiger:90:identifying}. Since we use those axioms in our proofs, we provide them in Supplementary Material~\ref{sup:proofs}.
\fi

We round up this section by defining the causal Markov condition~\citep{spirtes:00:book} (CMC) for DAGs.

\begin{definition}[Causal Markov Condition]
Given the tri\-ple $\Models = (G, \variables, P)$, the causal Markov condition holds, if every $d$-separation in $G$ implies an independence in $P$.
\end{definition}

The causal Markov condition is one of the most essential assumptions for causal discovery algorithms. 
On the other hand, assumptions about what properties of the graph can be inferred based on the given distribution have been weakened over time~\citep{ramsey:06:cpc,zhang:08:triangle,forster:17:frugality}. Most commonly known is the faithfulness assumption.

\section{ADJACENCY FAITHFULNESS AND WHEN IT IS VIOLATED}
\label{sec:problem}

To lay out the problem, we first explain faithfulness and adjacency faithfulness, then examine when those could fail and give a summary about the most relevant related approaches that use weaker assumptions.

The faithfulness assumption is one of the core assumptions made by most causal discovery algorithms~\citep{spirtes:00:book} and it can be seen as the inverse assumption to CMC---i.e. assuming that all independencies found in $P$ imply a $d$-separation in the causal graph. Adjacency faithfulness is a slightly weaker assumption.

\begin{definition}[Adjacency Faithfulness]
\iflong
Given $\Models = (G, \variables, P)$,
\else
Let $\Models = (G, \variables, P)$,
\fi
if $X, Y \in \variables$ are adjacent in $G$, then they are probabilistically dependent given all $\bm{S} \subseteq \variables \backslash \{ X, Y \}$.
\end{definition}

Alternatively, we could turn this definition around by stating that if we find a conditional independence in $P$, then we assume that there is no edge in the corresponding graph. Assuming adjacency faithfulness ensures that we recover the correct skeleton graph (i.e.~the undirected graph). Correct detection of the skeleton together with the correct identification of all collider structures ensures that the detected graph is in the Markov equivalence class of the true graph~\citep{verma:91:equivalence}. 
\iflong
The missing ingredient---i.e.~the correct detection of all collider structures---is ensured by additionally assuming orientation faithfulness~\citep{zhang:08:triangle}.
\else
The latter is ensured by additionally assuming orientation faithfulness~\citep{zhang:08:triangle}.
\fi

\begin{definition}[Orientation-Faithfulness]
\iflong
Given $\Models = (G, \variables, P)$. Let the path $\langle X,Y,Z \rangle$ be unshielded\footnote{For an unshielded path $\langle X,Y,Z \rangle$, $X$ is adjacent to $Y$ and $Y$ is adjacent to $Z$, but $X$ is not adjacent to $Z$.} in $G$.
\else
Let $\Models = (G, \variables, P)$ and let $\langle X,Y,Z \rangle$ be un unshielded path\footnote{For an unshielded path $\langle X,Y,Z \rangle$, $X$ is adjacent to $Y$ and $Y$ is adjacent to $Z$, but $X$ is not adjacent to $Z$.} in $G$.
\fi
\begin{enumerate}
	\item If $X \rightarrow Y \leftarrow Z$, then $X$ and $Z$ are dependent given any subset in $\variables \backslash \{ X, Z \}$ that contains $Y$; otherwise
	\item $X$ and $Z$ are dependent conditional on any subset of $\variables \backslash \{ X, Z \}$ that does not contain $Y$.
\end{enumerate}
\end{definition}

The bottleneck here is the adjacency faithfulness assumption, as many causal discovery algorithms such as PC~\citep{spirtes:00:book} or GES~\citep{chickering:02:ges} rely on finding adjacent nodes either by checking for marginal dependencies or adding single edges based on adjacency faithfulness and CMC.
\iflong
If one is willing to assume that those assumptions hold, then any violation of orientation faithfulness can be detected as shown by \cite{zhang:08:triangle}.
\fi
However, adjacency faithfulness can be violated in many ways, e.g.~by xor-type connections, path cancellations, or deterministic relations. We briefly explain the first two below, as they are relevant for the remainder. For deterministic relations and finite sample failures, we refer to \cite{lemeire:12:conservative}.

\subsection{Xor-Type Relations}

\iflong
\else
\begin{figure}
	\centering
	\begin{minipage}[t]{.5\linewidth}
	\centering
	\includegraphics[]{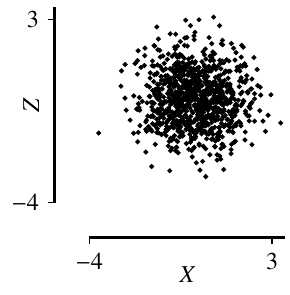}
   	\end{minipage}%
       	\begin{minipage}[t]{.5\linewidth}
	\centering
	\includegraphics[]{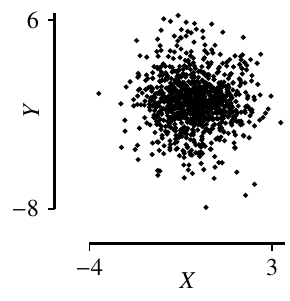}
   	\end{minipage}%
	\newline
	\begin{minipage}[t]{.5\linewidth}
	\centering
	\includegraphics[]{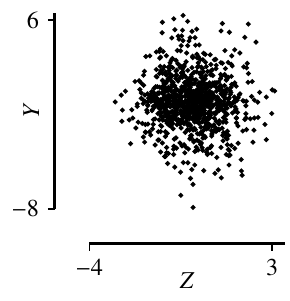}
   	\end{minipage}%
	\begin{minipage}[t]{.5\linewidth}
	\centering
	\includegraphics[]{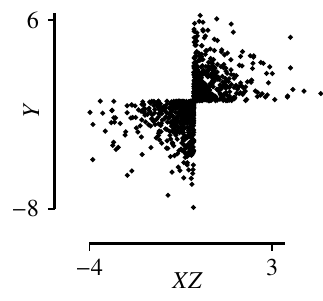}
   	\end{minipage}%
	\caption{Sample data for the collider graph $X \to Y \leftarrow Z$, where $X,Z \sim N(0,1)$ are iid and $Y := \text{sign}(XZ) \cdot E$, with $E \sim \text{Exp}(\frac{1}{\sqrt{2}})$. The dependence is only detectable by considering all three variables jointly.}
	\label{fig:explosion-example}
\end{figure}
\fi

In this work, we focus on xor-type relations. That is, given a triple of nodes $X,Y,Z \in \variables$ s.t. $X \rightarrow Y \leftarrow Z$, where at least one of the causal edges cannot be detected by a marginal dependence, but only by looking at the joint distribution of $X,Y$ and $Z$.
The key here is that either parent of $Y$ might not be dependent on $Y$, but by considering both parents, we can detect the dependence. To illustrate this, consider the following example where we describe a noisy xor with an unobserved noise variable modelled by a biased coin as it is common for binary causal structures~\citep{inazumi:11:exclusiveor}.

\begin{example}
\label{ex:one}
\iflong
Let $\Models = (G, \variables, P)$ be a causal model. Given variables $X,Y,Z \in \variables$ such that $X \rightarrow Y \leftarrow Z$
\else
Let $\Models = (G, \variables, P)$ be a causal model. Given variables $X,Y,Z \in \variables$ s.t.~$X \rightarrow Y \leftarrow Z$
\fi
in $G$ and there is no edge connecting $X$ and $Z$, as in Figure~\ref{fig:adjacency-failure}(a), where $X, Z$ are fair independent coins. Their common effect $Y$ is generated as $Y := (X \oplus Z) \oplus E$, where $\oplus$ denotes xor---i.e. $X \oplus Z := (X + Z) \mod 2$---and $E$ is a biased coin with $P(E = 1) = p$, where $0 \le p < \frac{1}{2}$ and $E \Indep_P \{ X, Z \}$. Hence, $X \nIndep_G Y$, $Z \nIndep_G Y$, however, due to the xor, we have that $X \Indep_P Y$ and $Z \Indep_P Y$. Both edges violate adjacency faithfulness. If we were to check the joint distribution, we can find that $Y \nIndep_P \{ X, Z \}$, or $X \nIndep_P Z \mid Y$, since we get that $P(X=1,Z=1,Y=1) = \frac{p}{4}$, where $P(X=1,Z=1) \cdot P(Y=1) = \frac{1}{4} \cdot \frac{1}{2} = \frac{1}{8}$. Those terms are only equal if $p = \frac{1}{2}$, which we excluded by assumption.
\end{example}

Similar examples, where marginal dependencies might be hard to detect, can also be constructed for continuous data~\citep{sejdinovic:13:lancasterkernel}---e.g.~if $X,Z$ are normally distributed with mean zero and variance one, and $Y := \text{sign}(XZ) \cdot E$, with exponentially distributed noise $E \sim \text{Exp}(\frac{1}{\sqrt{2}})$ (see Figure~\ref{fig:explosion-example}). The authors demonstrate that with a high probability no marginal dependence between $X,Z$ as well as $X,Y$ can be detected using a kernel dependence measure such as the Hilbert-Schmidt Independence Criterium~\citep{gretton:05:hsic}. They do, however, reliably detect a dependence between all three variables. Further, \cite{marx:21:myl} show that dependencies generated by the above mechanism and generalizations of it to mixed-type data can be detected by a broad range of dependence measures such as kernel tests and tests based on conditional mutual information. Besides iid data, the work of \cite{sejdinovic:13:lancasterkernel} has been extended to time series data~\citep{rubenstein:16:lancaster-ts}. In this setting, \cite{rubenstein:16:lancaster-ts} showcase the efficiency of their approach on a Forex data set, which contains strong triple interactions.

Motivated by these positive results, we focus on the theoretical foundations that allow us to detect such triples in a causal setting and anlaylze under which conditions we can identify the collider.

\subsection{Cancelling Paths}
\label{sec:cancelling:path}

A minimal example of cancelling paths was given by \cite{hesslow:76:examples} and is illustrated with the causal graph shown in Figure~\ref{fig:adjacency-failure}(b). In Hasslow's example taking birth control pills ($X$) can influence the risk of getting thrombosis ($Y$) via two paths. It has a direct effect and also taking the pills reduces the chance of pregnancy ($Z$), which itself is a cause of thrombosis. However, the causal effects induced by those paths cancel
\iflong
such that $X \Indep_P Y$ even though $X \nIndep_G Y$.
\else
such that $X \Indep_P Y$.
\fi
As an example mechanism that causes such a cancellation, consider a linear Gaussian system in which $Z := \alpha X$, $Y := \beta Z - \gamma X$ and $\gamma = \alpha \beta$.
This failure of faithfulness was shown to be undetectable since $X$ will be dependent on $Y$ given $Z$ and hence the graph $X \rightarrow Z \leftarrow Y$ is also a valid graph for those independencies---i.e.~Markov equivalent~\citep{zhang:08:triangle}. There exist cancelling paths that consist of more than three variables, which are detectable, e.g. if $Z$ is not adjacent to $Y$, but there is a path $Z \to W \to Y$~\citep{zhang:08:triangle}. 

\subsection{Weaker Assumptions}
\label{sec:related}

In the following, we discuss different approaches on how to relax the faithfulness assumption.

Two well-studied assumptions are P-minimality~\citep{pearl:09:causality} and SGS-minimality~\citep{spirtes:00:book}. While the former states that from all DAGs that satisfy the causal Markov condition w.r.t.~$P$, the DAG that entails most conditional independence statements is preferred. The latter assumes that no proper subgraph of the true DAG fulfils the causal Markov condition w.r.t.~to $P$. From both assumptions, SGS-minimality is the weaker assumption~\citep{zhang:13:comparison}. 
In a different line of research, it was shown that SGS-minimality suffices for causal discovery approaches based on the additive noise assumption~\citep{peters:14:anm}.

A more recent approach by \cite{forster:17:frugality} introduces the concept of \textit{frugality}, which is a stronger assumption than both minimality assumptions. The authors define a DAG $G$ to be more frugal than $G'$, if $G$ contains fewer edges than $G'$. A maximally frugal DAG uses only as many edges as are necessary to satisfy the causal Markov condition. 
To determine maximally frugal graphs, one has to consider all causal orderings of the variables, which is rather costly, but can be solved using permutation algorithms~\citep{raskutti:18:permutation}. Another approach to discover causal graphs based on frugality, or any of the above assumptions is based on 
\iflong
boolean satisfiability (SAT) solvers~\citep{zhalama:17:sat}.
\else
SAT solvers~\citep{zhalama:17:sat}.
\fi

\iflong
In this paper, we introduce the $2$-adjacency faithfulness assumption, which allows us to find xor-type relations, some faithfulness violations induced by cancelling paths and all relations that can be detected by assuming adjacency faithfulness.
\else
In this paper, we introduce $2$-adjacency faithfulness, which allows us to find xor-type relations, some faithfulness violations induced by cancelling paths and all relations that are detectable by assuming adjacency faithfulness.
\fi
We conjecture that $2$-adjacency faithfulness in combination with some minimality assumption, e.g. SGS minimality (see Section~\ref{sec:towards}), is a slightly stronger assumption than frugality since frugality considers all permutations~\citep{forster:17:frugality}. Hence, frugality might be able to also detect structures for which it is necessary to observe more than three nodes to find a dependence. However, how often such structures occur in real data is unknown. Thus, resorting to only consider all triple structures might be more efficient than having to check all permutations.
In addition, we extend existing work by providing a sound orientation rule that can be used to infer the edges within a $2$-association, if they appear in a larger graph.

\iflong
In the next section, we discuss xor-type relations, which are a generalization of Example~\ref{ex:one} and can be described as $2$-associations.
\else
Next, we discuss xor-type relations in more detail.
\fi
We use those structures as an example to illustrate one of the main properties of $2$-associations, that we describe in Theorem~\ref{th:collider}.

\section{UNFAITHFUL TRIPLES}
\label{sec:ufts}

We first define what we call an unfaithful triple\footnote{\cite{ramsey:06:cpc} used the term unfaithful triple for the non-detectable faithfulness violation explained in Section~\ref{sec:cancelling:path}.} and its properties, and then argue why such a triple a) violates adjacency faithfulness and b) even if detected, the underlying DAG structure cannot be uniquely determined without further information.

\begin{definition}[Unfaithful Triple]
\label{def:unfaithful:triple}
Given $\Models = (G, \variables, P)$ and three distinct nodes $X,Y,Z \in \variables$: if $X,Y$ and $Z$ are marginally independent but not mutually independent in $P$, we call $\{ X,Y,Z \}$ an unfaithful triple w.r.t.~$P$.\!\footnote{Not mutually independent implies that $X \nIndep_P \{ Y, Z \}$, $Y \nIndep_P \{ X, Z \}$ or $Z \nIndep_P \{ X, Y \}$.} If further for each distinct pair of nodes $A, B \in \{ X,Y, Z \}:$
\[
\forall \bm{S} \subseteq \variables \backslash \{ X,Y,Z \}: A \nIndep_P B \mid \bm{S} \cup \{ X,Y, Z \} \backslash \{ A,B \} \, ,
\]
we call $\{X,Y,Z \}$ a minimal unfaithful triple.
\end{definition}

The first example for such a triple for three binary random variables was given by \cite{bernstein:27:theory}, which is equivalent to 
\iflong
our noisy xor example. 
\else
Example~\ref{ex:one}.
\fi
The minimality condition ensures that the three nodes are connected by a path of length two, as we will show below. 
This concept is also illustrated in Figure~\ref{fig:minimal-uft}.

We start by showing that if three random variables $\{ X,Y,Z \}$ are marginally independent, finding a dependence between all three variables, e.g.~$X \nIndep_P \{ Y, Z \}$, implies that also $Y \nIndep_P \{ X, Z \}$ and $Z \nIndep_P \{ X, Y \}$.
\begin{restatable}{lemma}{leuftand}
\label{le:uftand}
Given $\Models = (G, \variables, P)$, let $\{ X,Y,Z \} \subseteq \variables$ form an unfaithful triple in $P$, then $X \nIndep_P \{ Y, Z \}$, $Y \nIndep_P \{ X, Z \}$ and $Z \nIndep_P \{ X, Y \}$, which in addition implies that $X \nIndep_P Y \mid Z$, $X \nIndep_P Z \mid Y$ and $Y \nIndep_P Z \mid X$.
\end{restatable}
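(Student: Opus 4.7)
The plan is to prove the lemma purely at the level of the probability distribution $P$, using the semi-graphoid axioms (symmetry, decomposition, weak union, contraction), which hold for any probability distribution. No graph-specific reasoning is needed, which is fortunate since the hypothesis concerns only $P$.

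First I would handle the three joint dependencies. By definition of an unfaithful triple we know the three marginal independencies $X \Indep_P Y$, $X \Indep_P Z$, $Y \Indep_P Z$, and that at least one of $X \nIndep_P \{Y,Z\}$, $Y \nIndep_P \{X,Z\}$, $Z \nIndep_P \{X,Y\}$ holds. Without loss of generality, assume $X \nIndep_P \{Y,Z\}$. To derive $Y \nIndep_P \{X,Z\}$, I argue by contradiction: if $Y \Indep_P \{X,Z\}$, then weak union gives $Y \Indep_P X \mid Z$, hence by symmetry $X \Indep_P Y \mid Z$. Combining this with the marginal $X \Indep_P Z$ through contraction yields $X \Indep_P \{Y,Z\}$, contradicting the assumed joint dependence. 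The symmetric argument, swapping the roles of $Y$ and $Z$, gives $Z \nIndep_P \{X,Y\}$. Thus all three joint dependencies hold.

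Next I would derive the three conditional dependencies. To show $X \nIndep_P Y \mid Z$, assume for contradiction $X \Indep_P Y \mid Z$; combining with the marginal $X \Indep_P Z$ via contraction gives $X \Indep_P \{Y,Z\}$, contradicting what we just established. The same contraction trick, applied with $X \Indep_P Y$ and the hypothetical $X \Indep_P Z \mid Y$, yields $X \Indep_P \{Y,Z\}$ again, so $X \nIndep_P Z \mid Y$. Finally, $Y \nIndep_P Z \mid X$ follows by contradiction using $Y \Indep_P X$ and contraction to reach $Y \Indep_P \{X,Z\}$, which we have already ruled out.

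The only slightly subtle point is the initial asymmetry: the definition of "not mutually independent" supplies only one of the three joint dependencies, so the joint dependencies must be established before the conditional ones. Once this is observed, every step is a direct application of contraction (with weak union used once to move from a joint independence to a conditional one). There is no real obstacle; the lemma is essentially a bookkeeping exercise in the semi-graphoid calculus, and the proof should be only a few lines.
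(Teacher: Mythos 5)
Your proof is correct and uses essentially the same machinery as the paper's: the same chain of weak union, symmetry, and contraction to show that a single joint independence would propagate to all three (and hence to mutual independence), followed by contraction to convert each joint dependence plus a marginal independence into a conditional dependence. The only difference is cosmetic — the paper assumes one of the joint dependencies fails and derives mutual independence, while you fix the guaranteed joint dependence and refute the others one at a time; these are contrapositive framings of the same argument.
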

\iflong
\begin{proof}
Assume that w.l.o.g.~$X \nIndep_P \{ Y, Z \}$ is violated. By weak union, we get $X \Indep_P Y \mid Z$ which is equivalent to $Y \Indep_P X \mid Z$, using symmetry. We know that $Y \Indep_P Z$. By contraction, we get that $Y \Indep_P \{ X, Z \}$. Similarly, we conclude that $Z \Indep_P \{ X, Y \}$. Altogether, this implies that $X,Y,Z$ would be independent, which is a contradiction.

Each pair of joint dependence and marginal independence, e.g.~$X \nIndep_P \{ Y, Z \}$ and $X \Indep_P Z$, implies a conditional dependence, e.g.~$X \nIndep_P Y \mid Z$, by contraction.
\end{proof}
\else
We provide the proofs for Lemma~\ref{le:uftand} and the following Lemma~\ref{le:connected} and Theorem~\ref{th:collider} in Supplementary Material~\ref{sup:proofs}.
\fi

\begin{figure}[t]%
	\centering
	\includegraphics[]{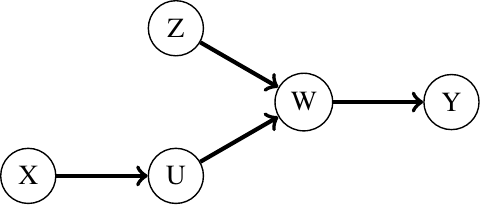}
	\caption{Assume that $\{ X,Y,Z \}$ form an unfaithful triple. Since $X$ is $d$-separated from $Y$ given $U$ and $Z$, they do not form a minimal unfaithful triple. Neither do $\{ U ,Y ,Z \}$, since $Y$ can be $d$-separated from $U$ given $\{ W, Z \}$. Thus, only $\{ U,W, Z \}$ can be a minimal unfaithful triple.}
	\label{fig:minimal-uft}
\end{figure}

Consider Example~\ref{ex:one}. Since $X,Y,Z$ form an unfaithful triple, we can infer from Lemma~\ref{le:uftand} that each pair is conditionally dependent given the third node. As there are no other nodes in the graph, $X,Y,Z$ must form a minimal unfaithful triple.

Next, we show that (minimal) unfaithful triples must be connected in the causal graph.

\begin{restatable}{lemma}{leconnected}
\label{le:connected}
Given $\Models = (G, \variables, P)$, let $\{ X,Y,Z \} \subseteq \variables$ form an unfaithful triple in $P$. If CMC holds, each node in the triple is $d$-connected to at least one other node in the triple by a path in $G$.
\end{restatable}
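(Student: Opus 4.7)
The plan is to chain Lemma~\ref{le:uftand} with the contrapositive of the causal Markov condition. From Lemma~\ref{le:uftand} we know that for an unfaithful triple $\{X,Y,Z\}$ we have the three ``joint'' dependencies $X \nIndep_P \{Y,Z\}$, $Y \nIndep_P \{X,Z\}$ and $Z \nIndep_P \{X,Y\}$ (unconditional, i.e.~given $\emptyset$). The target conclusion is a purely graphical statement, so the natural bridge is CMC in contrapositive form: if $A \nIndep_P B \mid \bm{S}$, then $A$ and $B$ are not $d$-separated given $\bm{S}$ in $G$.

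First I would observe that $d$-separation between a single node $A$ and a set $\bm{B}$ is defined pairwise: $A \Indep_G \bm{B} \mid \bm{S}$ iff $A \Indep_G B \mid \bm{S}$ for every $B \in \bm{B}$. Consequently, $A \nIndep_G \bm{B} \mid \bm{S}$ means that there exists some $B \in \bm{B}$ with $A \nIndep_G B \mid \bm{S}$, i.e.~a $d$-connecting path between $A$ and $B$ given $\bm{S}$.

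Applying this to the node $X$: by CMC and $X \nIndep_P \{Y,Z\}$ we get $X \nIndep_G \{Y,Z\}$ (given $\emptyset$), and hence there exists $V \in \{Y,Z\}$ such that $X$ is $d$-connected to $V$ in $G$ via some path. The same argument applied symmetrically to $Y$ using $Y \nIndep_P \{X,Z\}$, and to $Z$ using $Z \nIndep_P \{X,Y\}$, yields that each of the three nodes is $d$-connected to at least one other node of the triple, which is exactly the claim.

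There is no real obstacle here; the only subtlety is remembering that $d$-separation between a node and a \emph{set} decomposes pairwise, so that a joint dependence forces at least one pairwise $d$-connection rather than merely ``some'' connection in the graph. The lemma is essentially an immediate corollary of Lemma~\ref{le:uftand} and CMC.
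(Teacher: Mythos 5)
Your proof is correct and is essentially the paper's argument in contrapositive form: the paper assumes a node is $d$-separated from both others, applies the composition graphoid axiom (which, given the paper's pairwise definition of set $d$-separation, is exactly the decomposition you invoke) and CMC to reach a contradiction with the joint dependencies from Lemma~\ref{le:uftand}. No gap.
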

\iflong
\begin{proof}
Assume w.l.o.g. that $X$ is $d$-separated from $Y$ and $Z$ in $G$---i.e. $X \Indep_G Y$ and $X \Indep_G Z$. By applying the composition axiom, we get that $X \Indep_G \{ Y, Z \}$. If we apply the causal Markov condition, we get that $X \Indep_P \{ Y, Z \}$, which is a contradiction to our assumption.
\end{proof}
\fi

Further, we show that a minimal unfaithful triple has to contain a collider on a path of length two that connects all three nodes in the triple, e.g. $X \to Y \leftarrow Z$. To do that, we first show a more general statement.

\begin{restatable}{theorem}{thcollider}
\label{th:collider}
Given $\Models = (G, \variables, P)$ with distinct $X,Y,Z \in \variables$ and assume that CMC holds. If $\forall \bm{S} \subseteq \variables \backslash \{ X,Y,Z \}$ it holds that $X \nIndep_P Y \mid Z \cup \bm{S}$, $X \nIndep_P Z \mid Y \cup \bm{S}$ and $Y \nIndep_P Z \mid X \cup \bm{S}$, then one of the three nodes is a collider on a path of length two between the two other nodes, e.g. $X \to Y \leftarrow Z$ in $G$.
\end{restatable}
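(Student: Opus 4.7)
The plan is to translate the three conditional-dependence hypotheses into d-connectivity in $G$ via the causal Markov condition, and then combine the ``for all $\bm{S}$'' quantifier with the acyclicity of $G$ to force a length-two collider structure inside $\{X,Y,Z\}$.

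First, I instantiate the hypothesis with the maximal choice $\bm{S} = \variables \setminus \{X,Y,Z\}$ for each pair $(A,B)$ with third node $C$. The conditioning set is then $\variables \setminus \{A,B\}$, so any d-connecting path from $A$ to $B$ must have only colliders at its intermediate nodes: any non-collider intermediate would automatically lie in the conditioning set and block the path. Since a DAG admits no two consecutive colliders on a path (they would force a $2$-cycle between them), such a path has length at most two. Hence, for each pair, either $A$ and $B$ are adjacent in $G$, or there exists $V$ with $A \to V \leftarrow B$ in $G$.

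Second, I sharpen the candidate $V$ by also using the hypothesis at $\bm{S} = \emptyset$, where the conditioning set is $\{C\}$: a length-two collider path through $V$ can only be active given $\{C\}$ when $V = C$ or $V$ has $C$ as a descendant, i.e.\ $V \in \{C\} \cup \An(C)$. The residual worry that a longer d-connecting path might carry the connection for small $\bm{S}$ while some $V \notin \{C\} \cup \An(C)$ covers only large $\bm{S}$ is ruled out by enlarging $\bm{S}$ so as to simultaneously include the non-collider intermediates of every longer path in the finite graph $G$; this is the point I expect to require the most care, since it has to rely on finiteness to simultaneously ``block'' all alternative paths with a single conditioning set. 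The net conclusion is: for each pair $(A,B)$ with third node $C$, either $A$ and $B$ are adjacent, or there exists $V \in \{C\} \cup \An(C)$ with $A \to V \leftarrow B$ in $G$.

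Third, I run a short case analysis. If all three pairs in $\{X,Y,Z\}$ are adjacent, the induced subgraph is a triangle whose unique sink (by acyclicity of $G$) is a collider on a length-two path between the other two. Otherwise at least one pair, WLOG $(X,Y)$, is non-adjacent, so the previous step yields $V_{XY} \in \{Z\} \cup \An(Z)$ with $X \to V_{XY} \leftarrow Y$. If $V_{XY} = Z$, the configuration $X \to Z \leftarrow Y$ is the required collider. Otherwise $V_{XY} \in \An(Z) \setminus \{Z\}$, and the directed chain $V_{XY} \to \dots \to Z$ makes both $X$ and $Y$ strict ancestors of $Z$ in $G$.

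Finally, I close out using this ancestry together with acyclicity. Adjacency of $(X,Z)$ is forced to be $X \to Z$, and similarly adjacency of $(Y,Z)$ forces $Y \to Z$; any combination of these that keeps the remaining non-adjacent pair non-adjacent must still produce $X \to Z \leftarrow Y$. If instead $(X,Z)$ is non-adjacent, the refined step gives $V_{XZ} \in \{Y\} \cup \An(Y)$: either $V_{XZ} = Y$, producing the collider $X \to Y \leftarrow Z$, or $V_{XZ} \in \An(Y) \setminus \{Y\}$, which forces $Z$ to be an ancestor of $Y$ and thereby a directed cycle with $Y \in \An(Z)$. The symmetric argument on $(Y,Z)$ non-adjacent proceeds identically via $V_{YZ}$. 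In every branch we either exhibit the required collider inside $\{X,Y,Z\}$ or derive a directed cycle contradicting the DAG assumption on $G$, completing the proof.
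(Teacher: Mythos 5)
Your steps 1, 3 and 4 are sound, but the whole argument hinges on the refinement in step 2 --- that for a non-adjacent pair $(A,B)$ with third node $C$ the common child $V$ can be chosen in $\An(C)$ --- and the justification you sketch for it does not work as stated. The plan of ``enlarging $\bm{S}$ so as to include the non-collider intermediates of every longer path'' implicitly treats blocking as monotone in the conditioning set, but it is not: every node added to $\bm{S}$ may activate colliders and thereby open $d$-connecting paths that were previously closed, including exactly the length-two paths $A \to V \leftarrow B$ with $V \notin \An(C)$ that you are trying to rule out (such a path is open given $\{C\} \cup \bm{S}$ as soon as $V \in \An(\bm{S})$). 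What you actually need is a single set $\bm{S}$ that simultaneously blocks every longer path \emph{and} contains no descendant of any common child of $A$ and $B$ lying outside $\An(C)$, and nothing in the proposal shows such a set exists. The claim is nevertheless true and can be rescued: take $\bm{S} = \An(\{A,B,C\}) \backslash \{A,B,C\}$ and invoke the moralization characterization of $d$-separation. In the moral graph of $G$ restricted to the ancestral set $\An(\{A,B,C\})$, the nodes $A$ and $B$ are non-adjacent (they are non-adjacent in $G$, a common child inside $\An(A) \cup \An(B)$ would create a directed cycle, and a common child inside $\An(C)$ is excluded by the contradiction hypothesis), hence they are separated by all remaining vertices of that subgraph, which is precisely $\{C\} \cup \bm{S}$; CMC then contradicts the assumed dependence.

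For comparison, the paper's proof is far shorter and avoids paths entirely: by acyclicity some node of the triple, say $Z$, is an ancestor of neither of the other two, so $X, Y \in \Nd(Z)$ and the local Markov property gives $Z \Indep_G \{X,Y\} \mid \Pa(Z)$; if $X$ or $Y$ were missing from $\Pa(Z)$, decomposition and weak union would yield a $d$-separation of $Z$ from one of them given a set of the forbidden form (one node of the triple together with some $\bm{S} \subseteq \variables \backslash \{X,Y,Z\}$), contradicting the hypothesis, so $\{X,Y\} \subseteq \Pa(Z)$ and $X \to Z \leftarrow Y$. I would recommend either adopting that argument or patching step 2 along the lines above; the remainder of your case analysis then goes through (note in passing that the terminal outcome ``$V_{XZ}=Y$, producing $X \to Y \leftarrow Z$'' in step 4 actually contradicts the branch assumption that $X$ and $Y$ are non-adjacent, so that sub-branch is vacuous rather than a genuine second way of exhibiting the collider --- harmless, but worth stating correctly).
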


\iflong
We provide the proof for Theorem~\ref{th:collider} in Supplementary Material~\ref{sup:proofs}.
\fi
The theorem only states that there exists a collider, e.g. $X \to Y \leftarrow Z$, but not whether this path is shielded or not. Since we do not assume any marginal dependence or independence in Theorem~\ref{th:collider}, we can derive that the same statement holds for a minimal unfaithful triple. Notice that for a minimal unfaithful triple each pair of nodes is marginally independent, which implies that there is no way to decide which of the three possible collider structures corresponds with the causal graph in the absence of further information.

\iflong
\begin{corollary}
\label{cor:collider-in-triple}
Given $\Models = (G, \variables, P)$, where CMC holds and $\{ X,Y,Z \} \subseteq \variables$ form a minimal unfaithful triple, then one of the three nodes is a collider on a path of length two between the two other nodes, e.g. $X \to Y \leftarrow Z$ in $G$.
\end{corollary}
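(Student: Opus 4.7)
The plan is to derive Corollary~\ref{cor:collider-in-triple} as an immediate specialization of Theorem~\ref{th:collider}. The key observation is that the conclusion of the theorem is a statement about the existence of a collider on a length-two path, and its hypothesis is exactly a universally quantified conditional dependence statement for each of the three pairs in $\{X,Y,Z\}$ given the third node and an arbitrary extra set $\bm{S} \subseteq \variables \setminus \{X,Y,Z\}$. So the whole task reduces to verifying that a minimal unfaithful triple meets this hypothesis.

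First, I would unfold Definition~\ref{def:unfaithful:triple}. By definition of a minimal unfaithful triple, for each distinct pair $A,B \in \{X,Y,Z\}$ and every $\bm{S} \subseteq \variables \setminus \{X,Y,Z\}$ we have $A \nIndep_P B \mid \bm{S} \cup (\{X,Y,Z\} \setminus \{A,B\})$. Instantiating this with the three choices of $(A,B)$ gives exactly
\[
X \nIndep_P Y \mid Z \cup \bm{S}, \quad X \nIndep_P Z \mid Y \cup \bm{S}, \quad Y \nIndep_P Z \mid X \cup \bm{S},
\]
for all $\bm{S} \subseteq \variables \setminus \{X,Y,Z\}$, which is precisely the antecedent of Theorem~\ref{th:collider}.

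Since the causal Markov condition is assumed in the corollary, I can now invoke Theorem~\ref{th:collider} directly and conclude that one of the three nodes is a collider on a length-two path between the other two in $G$, e.g.\ $X \to Y \leftarrow Z$. Because no extra structural information is introduced by the corollary beyond what the theorem already supplies, there is essentially no obstacle here—the content really is in Theorem~\ref{th:collider}, and the corollary just repackages it. The only item worth flagging in the write-up is that the marginal independence part of the unfaithful triple definition is not needed for the conclusion itself (the collider existence), though it is what makes the three orientations of the collider observationally indistinguishable, a point already emphasized in the paragraph preceding the corollary.
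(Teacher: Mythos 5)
Your proposal is correct and matches the paper's own (one-line) justification: the defining condition of a minimal unfaithful triple is verbatim the antecedent of Theorem~\ref{th:collider}, so the corollary is an immediate instantiation of that theorem under CMC. Your remark that the marginal independences are not needed for collider existence, only for the indistinguishability of the three orientations, is also consistent with the paper's discussion.
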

\fi

Knowing that a minimal unfaithful triple has to contain a collider in $G$, it is obvious that such a structure violates adjacency faithfulness, as none of the edges is represented by a marginal dependence in $P$. 
The key point is that we can detect such interactions by taking multiple parents into account. In the following, we define a weaker assumption that allows us to detect and infer causal graphs that contain such faithfulness violations.

\section{$2$-ADJACENCY FAITHFULNESS}
\label{sec:connections}

To define our new assumption, we first need to define associations between a single node and a set of nodes.

\begin{definition}[$k$-Association]
\label{de:k-associated}
Let $P$ be the joint distribution of a set of observed random variables $\variables$. 
\begin{enumerate}
	\item Given distinct $X,Y \in \variables$, we say that X is $1$-associated to $Y$, if $\forall \bm{S} \subseteq \variables \backslash \{X,Y \}: X \nIndep_P Y \mid \bm{S}$.
	\item Given distinct $X, Y_1,Y_2 \in \variables$, $X$ is $2$-associated to $\{ Y_1,Y_2 \}$ if $\forall \bm{S} \subseteq \variables \backslash \{X,Y_1,Y_2 \}$ it holds that
	\begin{enumerate}[label=\roman*)]
		\item $X \nIndep_P Y_1 \mid \bm{S} \cup Y_2$,
		\item $X \nIndep_P Y_2 \mid \bm{S} \cup Y_1$ and
		\item $Y_1 \nIndep_P Y_2 \mid \bm{S} \cup X$.
	\end{enumerate}
\end{enumerate}
We call $X$ strictly $2$-associated to $\{ Y_1,Y_2 \}$, if $X$ is $2$-associated to $\{ Y_1, Y_2 \}$ and not $1$-associated to $Y_1$ or $Y_2$.
\end{definition}

In other words, $k$-associations relate to two types of dependencies: certain conditional dependencies between pairs of variables ($1$-associations) and between triples ($2$-associations). For readability, we use a shorthand notation and write $X \atwo \{ Y,Z \}$ if $X$ is $2$-associated to $Y$ and $Z$ resp. $X \aone Y$ if $X$ is $1$-associated to $Y$. We denote a strict $2$-association by ``$\atwos$''. If we refer to a set $\bm{Y}$ that contains at most two elements and we want to express that $X$ is either $1$- or $2$-associated to this set, we write $X \aletwo \bm{Y}$. Similarly, we write $X \aletwos \bm{Y}$, if $X$ is $1$- or strictly $2$-associated to $\bm{Y}$.

Pairwise dependencies can occur for example in a simple chain $X \to Y \to Z$, where no adjacency failure occurs. In this case, $X \aone Y$ and $Y \aone Z$. Triple interactions that match the definition of $2$-associations, however, need to have a specific structure. 
\iflong
As we can derive from Theorem~\ref{th:collider}, a $2$-association always has to contain a collider.

\begin{corollary}
\label{cor:associations:collider}
Given $\Models = (G, \variables, P)$, where CMC holds and $X,Y,Z \in \variables$. If $X \atwo \{Y,Z\}$, then one of the three nodes is a collider on a path of length two between the two other nodes, e.g. $X \to Y \leftarrow Z$ in $G$.
\end{corollary}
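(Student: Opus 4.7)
The plan is to observe that this corollary is essentially a direct specialization of Theorem~\ref{th:collider}. First I would unfold the definition of a $2$-association: by Definition~\ref{de:k-associated}, the statement $X \atwo \{Y,Z\}$ means that for every $\bm{S} \subseteq \variables \setminus \{X,Y,Z\}$ the three conditional dependencies $X \nIndep_P Y \mid \bm{S} \cup \{Z\}$, $X \nIndep_P Z \mid \bm{S} \cup \{Y\}$, and $Y \nIndep_P Z \mid \bm{S} \cup \{X\}$ all hold.

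Next I would note that these are precisely the three hypotheses required by Theorem~\ref{th:collider}, where the roles of the three distinguished variables are interchangeable. Since we assume CMC in the hypothesis of the corollary, all conditions of Theorem~\ref{th:collider} are satisfied verbatim by the triple $\{X,Y,Z\}$. Applying that theorem then immediately yields that one of $X,Y,Z$ is a collider on an unshielded path of length two between the other two, e.g.\ $X \to Y \leftarrow Z$ in $G$.

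There is essentially no obstacle here beyond a careful matching of quantifiers and conditioning sets; the substantive work has already been done in Theorem~\ref{th:collider}. The only thing I would double-check is that the rewriting $\bm{S} \cup Y$ used inside the definition of $2$-association coincides with the conditioning pattern $Y \cup \bm{S}$ used in the theorem for arbitrary $\bm{S}$, which is trivially true by symmetry of set union. Thus the proof is a one-line invocation of Theorem~\ref{th:collider} once the definition of $\atwo$ has been expanded.
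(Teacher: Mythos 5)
Your proposal is correct and matches the paper's own treatment: the paper gives no separate argument for this corollary, presenting it as an immediate consequence of Theorem~\ref{th:collider} once the definition of a $2$-association is unfolded, which is exactly what you do. One small caution: you claim the theorem yields a collider on an \emph{unshielded} path of length two, but neither the theorem nor its proof establishes unshieldedness (the paper explicitly remarks that the theorem does not say whether the path is shielded); this does not affect the corollary, which makes no such claim, but you should drop the word ``unshielded.''
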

\else
As we saw in Theorem~\ref{th:collider}, $2$-associations always contain a collider.
\fi
Thus, a chain graph or a common cause structure does not induce a $2$-association. On the other hand, the minimum unfaithful triple in Example~\ref{ex:one} matches the definition, since $X \atwos \{ Y, Z \}$, $Y \atwos \{ X, Z \}$ and $Z \atwos \{ X, Y \}$. 
In general, strict $2$-associations describe collider structures such as $X \to Y \leftarrow Z$ for which at least one of the edges violates adjacency faithfulness. If faithfulness holds, a collider structure induces a $2$-association, but not a strict $2$-association. We use this intuition for our new assumption.

\begin{definition}[$2$-Adjacency Faithfulness]
Given $\Models = (G, \variables, P)$, for all $X, Y \in \variables$, where $X$ and $Y$ are adjacent in $G$, there exists $\bm{Y} \subseteq \MB(X)$, with $Y \in \bm{Y}$, s.t. $X \aletwos \bm{Y}$.
\end{definition}

The main idea here is to weaken adjacency faithfulness such that if a marginal dependence is not present, i.e., adjacency faithfulness is violated, there will be a dependence in combination with a parent, child or spouse. If adjacency faithfulness is not violated, we will not find any strict $2$-associations and our assumption reduces to adjacency faithfulness. By also considering strict $2$-associations, however, we can discover a larger spectrum of causal mechanisms.

The textbook example for a mechanism that violates faithfulness but is detectable by assuming $2$-adjacency faithfulness is the xor-connection described in Example~\ref{ex:one}. Here, $Y \atwos \{ X, Z \}$, two parents, while $X \atwos \{Y, Z \}$---i.e. a child and a spouse. We could even slightly adapt the mechanism and only model $Z$ using an unbiased coin but use a biased coin for $X$. In this case, only $X$ is marginally independent of $Y$ and $Z$, while $Z$ becomes dependent on $Y$. 

Moreover, assuming $2$-adjacency faithfulness could even allow us to detect some faithfulness violations that are due to cancelling paths. In particular, consider the two paths $X \to Y$ and $X \to Z \to W \to Y$ that cancel such that $X \Indep_P Y$. Since $X \Indep_P Y$, $X \Indep_P W \mid Z$,
\iflong
$X$ could be strictly $2$-associated to the set $\{ W,Y \}$. 
\else
$X$ is not $1$-associated to $Y$ or $W$ and hence could be strictly $2$-associated to the set $\{ W,Y \}$. 
\fi
Since we know that a $2$-association contains a collider and we can neither find a $1$-association to $Y$ or $W$, we know that there has to be an edge violating adjacency faithfulness. 

It is not possible to rely on orientation faithfulness when dealing with strict $2$-associations. Although we know that a strict $2$-association has to contain a collider, we do not know the skeleton structure within the triple and hence cannot apply orientation faithfulness. 
\iflong
Next, we show that we can sometimes identify the collider if such a triple occurs in a larger graph.
\fi

\section{ORIENTATION}
\label{sec:orientation}

So far, we showed how we can detect unfaithful triples from conditional (in)dependence statements under the weaker assumption of $2$-adjacency faithfulness. Now imagine that we want to use this knowledge for causal discovery. If we observe an isolated triple that follows the dependence structure of the noisy xor, we can only tell that there is a collider.
However, if we are given more information, we are able to break this symmetry. 

\begin{figure}[t]%
	\begin{minipage}[t]{.5\linewidth}
		\centering
		\includegraphics[]{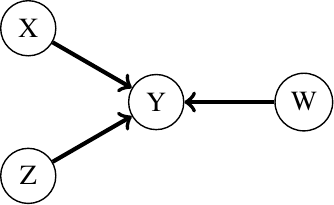}
		\subcaption{}
	\end{minipage}%
	\begin{minipage}[t]{.5\linewidth}
		\centering
		\includegraphics[]{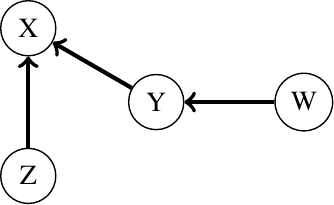}
		\subcaption{}
	\end{minipage}%
	\caption{In both distributions $Y \atwos \{X, Z \}$ and $Y-_1 W$. In the graph shown in (a) $Y$ is a collider on all paths between $\{X,Z \}$ and $W$, whereas in (b) $Y$ is a non-collider.}
	\label{fig:2-orientation-ff}
\end{figure}

\begin{example}
\label{ex:two}
Consider that $X$ and $Z$ are unbiased coins as in the noisy xor example. In addition, there is a binary variable $W$ with $P(W=1) = p$, where $0 < p < 1$ and an unobserved binary noise variable $E$ with $P(E=1) = q$, where $0 < q < \frac{1}{2}$. Now we generate $Y$ as
\[
Y := ((X \oplus Z) \land W) \oplus E \; ,
\]
where $E,W,X$ and $Z$ are drawn independently. The requirements for $p$ ensure that $W$ is dependent on $Y$ and the requirements on $E$ ensure that the dependencies are non-deterministic ($q \neq 0$) and evident without observing $E$ ($q \neq \frac{1}{2}$). The corresponding causal graph is given in Figure~\ref{fig:2-orientation-ff}(a). From the induced dependencies, that we derive in detail in Supplementary Material~\ref{sup:example}, we can now obtain an asymmetry. In particular, $\{ X,Y,Z \}$ form a minimal unfaithful triple, but only $Y$ is dependent on $W$, whereas $\{ X,Z \} \Indep_P W$ and due to the xor, $X \Indep_P W \mid Y$ as well as $Z \Indep_P W \mid Y$. Thus, we can detect that there is no edge between $X$ and $W$ or $Z$ and $W$ since none of these pairs can be $2$-associated. However, we do find that $X \nIndep_P W \mid \{ Y,Z \}$ and $Z \nIndep_P W \mid \{ Y,X \}$. As we will show in Theorem~\ref{th:soundness}, we can use this information to identify that $Y$ is the collider in the triple and that $W \to Y$.
\end{example}

To detect such an asymmetry, it is necessary that the collider in the triple is the effect of another node or pair of nodes. If, for example, $X$ would be the collider in the triple and $W \to Y$ (see Figure~\ref{fig:2-orientation-ff}(b)), we cannot find such an asymmetry. To generate that graph we could model $Y$ as a noisy copy of $W$ and construct $X$ with a noisy xor from $Y$ and $Z$. We still know that $W$ is adjacent to $Y$, but we cannot direct any of the edges as for example we would find that $X \nIndep_P W \mid Z$, which we would also observe if $Z$ would be the collider in the triple, or if we would flip the edge direction 
\iflong
between $Y$ and $W$---i.e.~$W$ would be a noisy copy of $Y$.
\else
between $Y$ and $W$---i.e.~if $W$ is a noisy copy of $Y$.
\fi

Based on this intuition, we propose an orientation rule that may include causal structures that induce strict $2$-associations. To do so, we use a shorthand notation---i.e. write $\bm{Y} \rightarrow X$, if for each element $Y \in \bm{Y}$ it holds that $Y \rightarrow X$ and vice versa write $X \rightarrow \bm{Y}$ if $X$ is a parent of each node $Y \in \bm{Y}$, that is, $\forall Y \in \bm{Y}: X \to Y$.

\begin{definition}[Orientation Rule]
\label{def:2orientationrule}
Let $M := (G,\variables,P)$ and we are given two disjoint sets $\bm{X}, \bm{Z} \subseteq \variables$ and $Y \in \variables$, where $Y \aletwos \bm{X}$ and $Y \aletwos \bm{Z}$, and no node $X \in \bm{X}$ is adjacent to some node $Z \in \bm{Z}$.
\begin{enumerate}[label=\roman*)]
	\item If for each pair $X \in \bm{X}$ and $Z \in \bm{Z}$ it holds that $X$ is dependent on $Z$ given any subset of $\variables \backslash \{ X,Z \}$ that contains ${Y} \cup (\bm{X} \backslash \{X \}) \cup (\bm{Z} \backslash \{Z \})$, then $\bm{X} \to Y \leftarrow \bm{Z}$,
	\item otherwise, if for each pair $X \in \bm{X}$ and $Z \in \bm{Z}$ it holds that $X$ is dependent on $Z$ conditional on any subset of $\variables \backslash \{ X,Z \}$ that contains $(\bm{X} \backslash \{X \}) \cup (\bm{Z} \backslash \{Z \})$ but does not contain $Y$, $Y$ is a non-collider on at least one path $\langle X,Y,Z \rangle$ where $X \in \bm{X}$ and $Z \in \bm{Z}$.
\end{enumerate}
\end{definition}

Simply put, the above orientation rule relies on the fact that a (strict) $2$-association contains a collider. Either $Y$ is the collider on each path $\langle X,Y,Z \rangle$ between any variable $X \in \bm{X}$ and $Z \in \bm{Z}$ or $Y$ is one of the parents in at least one of the triples and hence blocks at least one such path.
If both sets $\bm{X}$ and $\bm{Z}$ only contain a single element, rule i) refers to a ``normal'' collider e.g. $X \to Y \leftarrow Z$ and rule ii) refers either to a chain like $X \to Y \to Z$ or to a common cause $X \leftarrow Y \to Z$. Let us consider Example~\ref{ex:two} again, where we generated $Y$ as a non-deterministic function of $X,Z$ and $W$.
First, we find that $Y \atwos \{X,Z \}$, $Y \aone W$ and $W$ is not adjacent to $X$ or $Z$ (since $W$ is not $1$- or strictly $2$-associated to $X$ or $Z$), which is required to apply our rule. Further, we can apply rule i) since  $W$ is dependent on $X$ given any set that includes $\{Y, Z \}$ and $W$ is dependent on $Z$ given any set that includes $\{Y,X \}$. Hence, 
\iflong
we can infer that $\{ X,Z \} \to Y \leftarrow W$.
\else
we can infer the correct DAG structure $\{ X,Z \} \to Y \leftarrow W$.
\fi

In the following we will first show that our orientation rule is sound---i.e. if rule i) or ii) can be applied, then we are sure we found the corresponding graph structure---and then analyze the inverse, that is, what assumptions need to hold s.t. the given graph implies the suggested dependence model.

\begin{restatable}{theorem}{thsoundness}
\label{th:soundness}
Assuming that the causal Markov condition holds, the orientation rule in Definition~\ref{def:2orientationrule} is sound.
\end{restatable}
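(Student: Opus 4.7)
The plan is to prove soundness of rules~(i) and~(ii) separately, in each case by contradiction: whenever the concluded orientation fails in $G$, I exhibit a conditioning set that $d$-separates the pair $X \in \bm{X}$, $Z \in \bm{Z}$ of interest, so that CMC yields an independence contradicting the rule's premise. The workhorse is the standard DAG fact that a non-collider on a path is blocked by placing it in the conditioning set, whereas a collider is blocked by keeping it and its descendants out.

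For part~(i), fix $X \in \bm{X}$ and $Z \in \bm{Z}$. I first argue that $X$ is adjacent to $Y$. When $|\bm{X}|=1$ this is immediate: any $d$-separator of a non-adjacent pair (e.g.\ $\Pa(Y)$) would, via CMC, yield an independence contradicting $Y \aone X$. When $|\bm{X}|=2$ and $Y \atwos \bm{X}$, Corollary~\ref{cor:associations:collider} places a collider on a length-two path in the triple $\{Y\} \cup \bm{X}$; in the sub-case in which $X$ itself is that collider, combining non-adjacency of $\bm{X}$ and $\bm{Z}$ with the $Y \aletwos \bm{Z}$ hypothesis lets me build a $d$-separator of $X$ and $Z$ containing $\{Y\} \cup (\bm{X}\setminus\{X\}) \cup (\bm{Z}\setminus\{Z\})$, contradicting the premise. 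I then orient the edge as $X \to Y$: if instead $Y \to X$, every path from $X$ to $Z$ through $Y$ starts $X \leftarrow Y$, so $Y$ is a non-collider on it and is blocked by $Y \in \bm{S}$; extending any $d$-separator of the non-adjacent pair $X,Z$ (which exists in a DAG) to contain $\{Y\} \cup (\bm{X}\setminus\{X\}) \cup (\bm{Z}\setminus\{Z\})$ while preserving $d$-separation yields $X \Indep_P Z \mid \bm{S}$, contradicting the premise. A symmetric argument gives $Z \to Y$ for every $Z \in \bm{Z}$.

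For part~(ii), I reason dually: if $Y$ were a collider on every path $\langle X,Y,Z \rangle$ with $X \in \bm{X}$ and $Z \in \bm{Z}$, then every such path is blocked whenever $\bm{S}$ avoids $Y$ and the relevant descendants of $Y$; extending a $d$-separator of $X, Z$ to include $(\bm{X}\setminus\{X\}) \cup (\bm{Z}\setminus\{Z\})$ subject to these constraints yields an independence contradicting the premise.

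The main obstacle in both parts will be the set-extension step: augmenting a $d$-separating set with prescribed nodes can re-open colliders that were previously blocked. I expect to control this by working in the moralized graph of the smallest ancestral set containing $X$, $Z$, $Y$, and the prescribed extras, where $d$-separation reduces to ordinary graph separation and the effect of adding nodes is transparent; the associations $Y \aletwos \bm{X}$ and $Y \aletwos \bm{Z}$ should further ensure that including $\bm{X}\setminus\{X\}$ and $\bm{Z}\setminus\{Z\}$ does not open any unwanted paths.
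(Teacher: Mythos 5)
Your high-level strategy (contraposition plus CMC, exhibiting a $d$-separating set that contains or excludes $Y$ as required) is the same as the paper's, but the step you yourself flag as ``the main obstacle'' is the entire content of the proof, and your plan for it does not go through as written. The problem is not only that augmenting a separator can re-open colliders; it is that you fix an \emph{arbitrary} pair $X \in \bm{X}$, $Z \in \bm{Z}$ and try to separate that particular pair by a set containing $(\bm{X}\setminus\{X\}) \cup (\bm{Z}\setminus\{Z\})$. For an arbitrary $X$, some node of $(\bm{X} \cup \bm{Z})\setminus\{X,Z\}$ may be a descendant of $X$, and forcing it into the conditioning set can activate a collider path between $X$ and $Z$; the moralized-ancestral-graph remark is a hope, not an argument. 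The paper's resolution is a specific choice you are missing: by acyclicity pick $X \in \bm{X} \cup \bm{Z}$ that is an ancestor of no other node of $\bm{X} \cup \bm{Z}$, so that $(\bm{X} \cup \bm{Z})\setminus\{X\} \subseteq \Nd(X)$; the local Markov property gives $X \Indep_G (\bm{X}\cup\bm{Z})\setminus\{X\} \mid \Pa(X)$, and weak union then moves $(\bm{X}\setminus\{X\})\cup(\bm{Z}\setminus\{Z\})$ into the conditioning set with no collider worries. Whether $Y$ may also be added reduces exactly to whether $Y \in \Nd(X)$, which is what distinguishes the collider case from the rest. Since the rules' premises are universally quantified over pairs, exhibiting this one well-chosen pair suffices; your per-pair version asks for more than is needed and more than is true.

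Two further flaws. First, your claim that if $Y \to X$ then ``every path from $X$ to $Z$ through $Y$ starts $X \leftarrow Y$'' is false: a path through $Y$ need not use the edge between $Y$ and $X$ (e.g.\ a path of the form $X, A, Y, B, Z$), so you cannot conclude that $Y$ is a non-collider on all such paths from one edge orientation. Second, the negation of $\bm{X} \to Y \leftarrow \bm{Z}$ is not ``$Y \to X$ for some $X$'': when $Y \atwos \bm{X}$, Theorem~\ref{th:collider} only guarantees one collider inside the triple, so $Y$ need not even be adjacent to every node of $\bm{X}$. The paper handles this with a trichotomy on whether $Y$ is a descendant of all, none, or only some of $\bm{X} \cup \bm{Z}$, and in the mixed case derives $Y \in \Nd(X)$ for the chosen terminal $X$ by a separate contradiction; your proposal has no analogue of that case analysis.
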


\iflong
We provide the proof for Theorem~\ref{th:soundness} in Supplementary Material~\ref{sup:proofs}. We show both rules by contraposition, that is, to show the implication in rule ii) holds, we prove that if the true structure is $\bm{X} \to Y \leftarrow \bm{Z}$ (exactly the structure not implied by rule ii)), we can always find a pair $X \in \bm{X}$ and $Z \in \bm{Z}$ such that $X$ becomes independent of $Z$ if we condition on a set that includes $(\bm{X} \backslash \{X \}) \cup (\bm{Z} \backslash \{Z \})$, but does not contain $Y$. Rule i) can be proven accordingly.
\else
We provide the proof in Supplementary Material~\ref{sup:proofs}.
\fi

The question that remains is: Does the inverse always hold? For example, if the true graph contains a non-collider structure such as $X \to Y \to Z$, will we always find that $X \nIndep_P Z$? The short answer is no. Already when we only assume adjacency faithfulness, it can happen that $X \Indep_P Z$ although the true graph is $X \to Y \to Z$ and it holds that $X \nIndep_P Y$ and $Y \nIndep_P Z$, which is called failure of transitivity. More generally, assuming that orientation faithfulness holds, such failures will not occur. In the following, we extend this assumption to our setting.

\begin{definition}[2-Orientation Faithfulness]
\label{def:2orientationff}
Let $M := (G,\variables,P)$ and we are given two disjoint sets $\bm{X}, \bm{Z} \subseteq \variables$ and $Y \in \variables$, where $Y \aletwos \bm{X}$ and $Y \aletwos \bm{Z}$, and no node $X \in \bm{X}$ is adjacent to some node $Z \in \bm{Z}$.
\begin{enumerate}[label=\roman*)]
	\item If $\bm{X} \to Y \leftarrow \bm{Z}$ is in $G$, then for each pair $X \in \bm{X}$ and $Z \in \bm{Z}$, $X$ is dependent on $Z$ given any subset of $\variables \backslash \{ X,Z \}$ that contains ${Y} \cup (\bm{X} \backslash \{X \}) \cup (\bm{Z} \backslash \{Z \})$,
	\item otherwise, for each pair $X \in \bm{X}$ and $Z \in \bm{Z}$, $X$ is dependent on $Z$ conditional on any subset of $\variables \backslash \{ X,Z \}$ that contains $(\bm{X} \backslash \{X \}) \cup (\bm{Z} \backslash \{Z \})$, but not $Y$.
\end{enumerate}
\end{definition}

Equivalently to $2$-adjacency faithfulness, $2$-orientation faithfulness reduces to orientation faithfulness, if both sets $\bm{X}$ and $\bm{Z}$ only contain a single element. For orientation faithfulness, it has been shown that all failures can be detected under the assumption that adjacency faithfulness holds~\citep{zhang:08:triangle}. Sadly, an equally strong statement cannot be made for $2$-adjacency faithfulness and $2$-orientation faithfulness, as we discuss 
\iflong
in the following subsection.
\else
below.
\fi

\subsection{Failures of $2$-Orientation Faithfulness}

\begin{figure}[t]%
	\begin{minipage}[t]{.5\linewidth}
		\hspace*{-12pt}
		\includegraphics[]{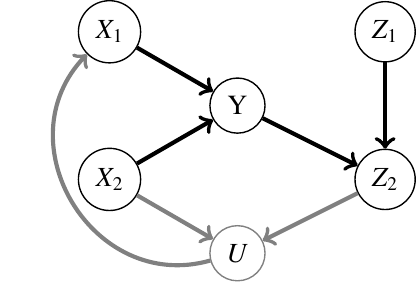}
		\subcaption{}
	\end{minipage}%
	\begin{minipage}[t]{.5\linewidth}
		\hspace*{-12pt}
		\includegraphics[]{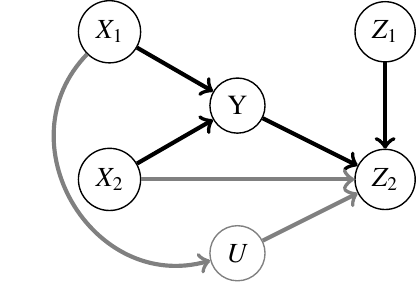}
		\subcaption{}
	\end{minipage}%
	\caption{In both figures, $Y \atwos \bm{X} = \{X_1, X_2 \}$, $Y \atwos \bm{Z} = \{Z_1, Z_2 \}$ (related nodes and edges are marked in black) and $X_2 \atwos \{ U, Z_2 \}$. 
	\iflong
	If we are only given this information, we cannot determine whether the path $\langle X_2,Y,Z_2 \rangle$ is unshielded (a) or shielded (b). 
	\else
	Given only this information, we cannot tell if the path $\langle X_2,Y,Z_2 \rangle$ is unshielded (a) or shielded (b). 
	\fi
	While in graph (a), we could safely apply our orientation rule, the shielded graph (b) can be problematic. Due to the directed path from $X_1$ over $U$ to $Z_2$ and the adjacency between $X_2$ and $Z_2$, each pair $X,Z \in \bm{X} \times \bm{Z}$ is now $d$-connected given $\{ Y \} \cup (\bm{X} \backslash \{ X \}) \cup (\bm{Z} \backslash \{ Z \})$. Thus, the condition for rule i) could hold, although $\bm{X} \to Y \leftarrow \bm{Z}$ is not in $G$.}
	\label{fig:unshielded-shielded}
\end{figure}

Without any assumptions, we can detect triples for which $Y \aletwos \bm{X}$ and $Y \aletwos \bm{Z}$, and know by assuming CMC that all $2$-associations contain a collider. If further, all paths $\langle X,Y,Z \rangle$ with $(X,Z) \in \bm{X} \times \bm{Z}$ are unshielded, we can detect if any of the conditions in $2$-orientation faithfulness fails. In particular, due to the soundness of our orientation rule, we would detect that none of the conditions in the orientation rule is satisfied if condition i) or ii) in $2$-orientation faithfulness fails, as we show in Corollary~\ref{cor:detection}.

Yet, we cannot detect all failures of $2$-orientation faithfulness. That is, due to the fact that we might not always be able to detect whether all paths $\langle X,Y,Z \rangle$ are unshielded. If there is a direct edge between $X$ and $Z$, we will always find that those nodes are either $1$-associated or there exists a third node $U$ such that they are strictly $2$-associated (if $2$-adjacency faithfulness holds). However, if we find a strict $2$-association between $X$ and $\{ Z,U \}$ there is no guarantee that the path is shielded. In particular, if $U$ is the collider between $X$ and $Z$, the triple is unshielded; but if $Z$ is the collider between $X$ and $U$, the triple is shielded (see Figure~\ref{fig:unshielded-shielded}, in which $X$ refers to $X_2$ and $Z$ to $Z_2$). In a causal discovery algorithm, we could try to iteratively infer the DAG structure within such triples until we cannot apply the rule anymore. If we are lucky, we can first infer that $X \to U \leftarrow Z$ and after that also apply our rule for $\{ \bm{X},Y,\bm{Z} \}$. Keeping this exception in mind, we can derive the following corollary from Theorem~\ref{th:soundness}.

\begin{restatable}{corollary}{cordetection}
\label{cor:detection}
Given $M := (G,\variables,P)$ with $Y \in \variables$ and $\bm{X}, \bm{Z} \subseteq \variables$, where $\bm{X} \cap \bm{Z} = \emptyset$, $Y \aletwos \bm{X}$, $Y \aletwos \bm{Z}$ and no pair of nodes $(X,Z) \in \bm{X} \times \bm{Z}$ is adjacent. Assuming that CMC holds, we can detect if condition i) or ii) of $2$-orientation faithfulness fails on the triple $\{ \bm{X}, Y, \bm{Z} \}$.
\end{restatable}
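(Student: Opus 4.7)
The plan is to prove this by contraposition, leveraging the soundness of the orientation rule from Theorem~\ref{th:soundness}. The key observation is that, under the hypotheses of the corollary---CMC, $Y \aletwos \bm{X}$, $Y \aletwos \bm{Z}$, and no $X \in \bm{X}$ adjacent to any $Z \in \bm{Z}$---the two parts of the orientation rule are mutually exclusive: by soundness, if the dependence condition of rule~i) holds then $\bm{X} \to Y \leftarrow \bm{Z}$ in $G$, while rule~ii) would conclude that $Y$ is a non-collider on at least one path $\langle X,Y,Z\rangle$. Hence at any triple satisfying the corollary's hypotheses at most one of the two rules can possibly fire, and I will show that \emph{exactly} one of them fires iff 2-orientation faithfulness holds. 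Since the firing conditions are phrased entirely in terms of conditional (in)dependence statements from $P$, this gives the claimed detection test.

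First I would split into the two failure cases of Definition~\ref{def:2orientationff}. Suppose condition i) fails, so $\bm{X} \to Y \leftarrow \bm{Z}$ is in $G$ yet some pair $(X,Z) \in \bm{X}\times\bm{Z}$ is independent given some subset of $\variables\setminus\{X,Z\}$ containing $\{Y\}\cup(\bm{X}\setminus\{X\})\cup(\bm{Z}\setminus\{Z\})$. Then the dependence premise of orientation-rule~i) is violated, so rule~i) cannot conclude a collider. If rule~ii) could fire, soundness (Theorem~\ref{th:soundness}) would give a non-collider on some $\langle X,Y,Z\rangle$ path, contradicting $\bm{X}\to Y\leftarrow\bm{Z}$. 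So rule~ii) is also blocked, and neither rule fires; this is observable.

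Next, suppose condition ii) of 2-orientation faithfulness fails, meaning the graph does not contain $\bm{X}\to Y\leftarrow\bm{Z}$, yet some pair $(X,Z)$ is independent given a subset of $\variables\setminus\{X,Z\}$ containing $(\bm{X}\setminus\{X\})\cup(\bm{Z}\setminus\{Z\})$ but not $Y$. The dependence premise of rule~ii) is then violated, so rule~ii) cannot fire. If rule~i) fired, soundness would force $\bm{X}\to Y\leftarrow\bm{Z}$, contradicting our assumption; thus rule~i) is blocked as well. Conversely, whenever 2-orientation faithfulness holds on the triple, the matching dependence premise from Definition~\ref{def:2orientationrule} is satisfied (i) when the graph is $\bm{X}\to Y\leftarrow\bm{Z}$, so rule~i) fires, or (ii) otherwise, so rule~ii) fires (rule~i) is blocked by soundness in the latter situation). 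Combining the two directions yields the equivalence ``2-orientation faithfulness fails on $\{\bm{X},Y,\bm{Z}\}$'' iff ``neither orientation rule fires on $\{\bm{X},Y,\bm{Z}\}$'', which is the desired detectability statement.

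The only delicate point, and the reason the corollary explicitly assumes no $X\in\bm{X}$ is adjacent to any $Z\in\bm{Z}$, is that the soundness of rule~i) crucially uses unshieldedness of the paths $\langle X,Y,Z\rangle$: without it, the discussion around Figure~\ref{fig:unshielded-shielded} shows that rule~i)'s dependence premise can be met via alternative $d$-connecting routes (e.g.\ through a spouse of some $Z\in\bm{Z}$), so rule~i) could spuriously fire even when $\bm{X}\to Y\leftarrow\bm{Z}$ does not hold. Under the corollary's hypothesis this cannot happen, and the contrapositive arguments above go through cleanly. I expect this is the only real obstacle; once the unshieldedness is in hand, the proof is a short bookkeeping exercise on top of Theorem~\ref{th:soundness}.
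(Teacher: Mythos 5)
Your proposal is correct and follows essentially the same route as the paper: a failure of condition i) (resp.\ ii)) of $2$-orientation faithfulness directly blocks the dependence premise of orientation rule i) (resp.\ ii)), and the soundness of the other rule (Theorem~\ref{th:soundness}) blocks that rule as well, so the failure is detected by observing that neither rule applies. Your additional explicit converse (if $2$-orientation faithfulness holds then exactly one rule fires) is a small completeness bonus that the paper leaves implicit, but it does not change the argument.
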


The proof is provided in Supplementary Material~\ref{sup:proofs}. In general, $2$-orientation faithfulness might be useful not only for constraint-based causal discovery methods, but also for algorithms that aim to discover the Markov blanket of a node or permutation-based discovery algorithms such as the Sparsest Permutation (SP) algorithm proposed by \cite{raskutti:18:permutation}. In Supplementary Material~\ref{sup:smr}, we provide a short discussion from which we conjecture that the SP algorithm can identify the collider pattern for strict $2$-associations as in Figure~\ref{fig:2-orientation-ff}, if $2$-orientation faithfulness holds.

\iflong
In the next section, we demonstrate how to put theory into practice and propose an algorithm to find the Markov blanket of a target node under $2$-adjacency faithfulness.
\fi

\section{IMPLEMENTATION}
\label{sec:implementation}

As a proof of concept, we propose a simple modification of the Grow and Shrink (GS) algorithm~\citep{margaritis:00:gs} to discover Markov blankets that may contain strict $2$-associations. After that, we briefly discuss further challenges that need to be solved to propose a causal discovery algorithm based on our new assumptions.

The GS algorithm is a simple and theoretically sound causal discovery algorithm, that as a first step identifies the Markov blanket for each node~\citep{margaritis:00:gs}. This step of the algorithm consists of a grow phase, in which we iteratively discover a superset of the Markov blanket of a target node $T$, and a shrink phase, in which superfluous nodes are pruned.

To make sure that we can detect Markov blankets that contain strict $2$-associations, we assume that $2$-adjacency faithfulness holds and that we can detect all spouses. For the latter, there are two options. Either the target node, the spouse and the common child are connected via a strict $2$-association, or the spouse node is only $1$- or strictly $2$-associated with the common child and not with the target node. For the first option, it suffices to assume $2$-adjacency faithfulness to detect the spouse, whereas for the second option we need to assume a variant of $2$-orientation faithfulness. In particular, consider the graph $T \to C \leftarrow S$, in which $T \aone Y$ and $C \aone Y$. Assuming only $2$-adjacency faithfulness will not guarantee that $T \nIndep S \mid C$, additionally assuming $2$-orientation faithfulness will. More generally, we need to assume that condition i) in $2$-orientation faithfulness also holds for shielded triples, which boils down to assuming that the spouses of the target do not cancel each other out, as we explain below.

\begin{assumption}
\label{as:one}
Let $M := (G,\variables,P)$ and we are given two disjoint sets $\bm{X}, \bm{Z} \subseteq \variables$ and $Y \in \variables$, where $Y \aletwos \bm{X}$ and $Y \aletwos \bm{Z}$. If $\bm{X} \to Y \leftarrow \bm{Z}$ in $G$, then for each pair $X \in \bm{X}$ and $Z \in \bm{Z}$, $X$ is dependent on $Z$ given any subset of $\variables \backslash \{ X,Z \}$ that contains ${Y} \cup (\bm{X} \backslash \{X \}) \cup (\bm{Z} \backslash \{Z \})$.
\end{assumption}

The above assumption is a relatively lightweight adaption of condition i) in $2$-orientation faithfulness.
In particular, let $\bm{X} = \{ X,T \}$, where $T$ is the target node. Then all nodes in $\bm{Z}$ are spouses of $T$, which we can detect if Assumption~\ref{as:one} holds and even become part of $\PC(T)$ if all paths $\langle T,Y, Z \rangle$ for $Z \in \bm{Z}$ are shielded. Thus, we would already add those nodes when looking for the parents and children of $T$. The only complication that may arise is if the second node $X \in \bm{X}$ is adjacent to a node in $Z \in \bm{Z}$ and this adjacency would lead to a cancellation such that $Z$ is only dependent on $T$ if we do not condition on $X$. The corresponding causal graph consists of the paths $T \to Y \leftarrow Z$ and $Y \leftarrow X \to Z$. 
\iflong
Since $X$ cannot block the path $\langle T,Y,Z \rangle$, such a scenario seems to be only possible if the causal mechanism that generates $Z$ from $X$ is deterministic. Based on this assumption, we can introduce our adapted GS algorithm.
\else
Since $X$ cannot block the path $\langle T,Y,Z \rangle$, such a scenario seems only possible if the causal mechanism that generates $Z$ from $X$ is deterministic. Based on this assumption, we introduce our adapted GS algorithm.
\fi

The generalized GS algorithm is shown in Algorithm~\ref{alg:alg1}, where we only modified the grow phase to also consider pairs of random variables. This allows us to find nodes to which the target node is strictly $2$-associated or spouses to which a child node of $T$ is strictly $2$-associated using Assumption~\ref{as:one}.
\iflong
The shrink phase is not modified and checks if singletons can be removed. It is important to note that we will not remove single nodes of a true strict $2$-association to $T$ or a child of $T$, because we do not check for marginal dependencies. 
\else
The shrink phase is not modified and checks if singletons can be removed. Importantly, we will not remove single nodes of a true strict $2$-association to $T$ or a child of $T$, because we do not check for marginal dependencies. 
\fi
For example, assume that $T \atwos \{ X, Z \}$ and both nodes were added in the grow phase, where $X$ is a child of $T$ and $Z$ the corresponding spouse. If we try to remove $X$ in the shrink phase, we find that $T \nIndep_P X \mid \bm{S} \backslash X$, since $Z \in \bm{S}$. Hence, $X$ remains in $\bm{S}$, as well as $Z$.

\begin{algorithm}[t!]
	\caption{Modified GS for Markov Blankets}
	\label{alg:alg1}
	\Input{Random variables $\variables$ with joint distribution $P$, Target $T \in \variables$}
	\Output{$\MB(T)$}
	$\variables' \leftarrow \variables \backslash \{ T \} $; \\
	$\bm{S} \leftarrow \emptyset$; \\
	\tcp{Grow Phase}
	\While{$\left( \exists X \in \variables': T \nIndep_P X \mid \bm{S} \right) \; \lor$ \\
	$\left( \exists X,Z \in \variables': T \nIndep_P X \mid \bm{S} \cup \{ Z \} \right)$}{
		$\bm{S} \leftarrow \bm{S} \cup \{ X \}$  \;
	}
	\tcp{Shrink Phase}
	\While{$\exists X \in \bm{S}: T \Indep_P X \mid \bm{S} \backslash X$}{
		$\bm{S} \leftarrow \bm{S} \backslash X$ \;
	}
	\Return{$\bm{S}$}
\end{algorithm}

In the following, we show that our proposed algorithm correctly identifies the Markov blanket of a target node assuming that $2$-adjacency faithfulness, the causal Markov condition and Assumption~\ref{as:one} hold.

\begin{restatable}{theorem}{thcorrectness}
\label{th:correctness}
Given $M = (G,\variables,P)$. Assuming that $2$-adjacency faithfulness, Assumption~\ref{as:one} and CMC hold, Algorithm~\ref{alg:alg1} correctly identifies $\MB(T)$ for $T \in \variables$.
\end{restatable}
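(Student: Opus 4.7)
The plan is to split the argument into two claims: (A) at the end of the grow phase $\bm{S} \supseteq \MB(T)$, and (B) the shrink phase reduces $\bm{S}$ to exactly $\MB(T)$. Termination of both loops is immediate because $\bm{S}$ changes monotonically inside the finite set $\variables$. For claim (A), I would argue by contradiction: suppose some $Y \in \MB(T) \setminus \bm{S}$ survives the grow phase, so both $T \Indep_P Y \mid \bm{S}$ and $T \Indep_P Y \mid \bm{S} \cup \{Z\}$ hold for every $Z \in \variables \setminus \{T\}$.

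If $Y \in \PC(T)$, then $Y$ is adjacent to $T$ and $2$-adjacency faithfulness produces $\bm{Y} \subseteq \MB(T)$ with $Y \in \bm{Y}$, $|\bm{Y}| \le 2$, and $T \aletwos \bm{Y}$. The case $\bm{Y} = \{Y\}$ yields $T \aone Y$, directly contradicting the singleton termination test. The case $\bm{Y} = \{Y, Y'\}$ gives a strict $2$-association and hence $T \nIndep_P Y \mid (\bm{S} \setminus \{Y'\}) \cup \{Y'\}$; this contradicts the singleton test when $Y' \in \bm{S}$ and the pair test (with $Z = Y'$) otherwise.

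If instead $Y$ is a pure spouse of $T$ via a common child $C$, the $\PC$ argument applied to $C$ forces $C \in \bm{S}$ at termination. Since $T \to C \leftarrow Y$ sits in $G$ and $C$ is adjacent to both $T$ and $Y$, two applications of $2$-adjacency faithfulness package $T$ and $Y$ into disjoint parent sets $\bm{X} \ni T$ and $\bm{Z} \ni Y$ with $C \aletwos \bm{X}$ and $C \aletwos \bm{Z}$. Assumption~\ref{as:one} then delivers $T \nIndep_P Y \mid \bm{W}$ for every $\bm{W} \supseteq \{C\} \cup (\bm{X} \setminus \{T\}) \cup (\bm{Z} \setminus \{Y\})$, and since the auxiliary nodes in $\bm{X} \cup \bm{Z}$ lie in $\MB(T) \cup \MB(Y)$ and are themselves captured by the $\PC$ argument, they all end up in $\bm{S}$, contradicting termination. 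Claim (B) then follows from two observations. First, CMC $d$-separates $T$ from every $X \in \bm{S} \setminus \MB(T)$ given $\MB(T)$, hence also given any $\bm{S} \setminus \{X\}$ still containing $\MB(T)$, so such $X$ satisfy the shrink test. Second, the invariant $\bm{S} \supseteq \MB(T)$ is preserved because the same witnesses used in (A), evaluated on $\bm{S} \setminus \{X\}$, yield $T \nIndep_P X \mid \bm{S} \setminus \{X\}$ whenever $X \in \MB(T)$ and $\bm{S} \supseteq \MB(T)$.

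The main obstacle is the spouse subcase of (A): when $C$ is only strictly $2$-associated with $T$ through some auxiliary third node $V$ rather than $1$-associated, invoking Assumption~\ref{as:one} requires enlarging $\bm{X}$ to contain $V$, ensuring $\bm{X}$ remains disjoint from $\bm{Z}$, and verifying that $V$ has itself already been placed into $\bm{S}$ when the assumption is applied. This calls for an inductive ordering argument over the successive growths of $\bm{S}$, together with careful control of pathological cancellations across shielded triples --- precisely the configurations that the non-determinism discussion immediately preceding the theorem is designed to exclude.
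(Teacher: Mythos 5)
Your proposal follows essentially the same route as the paper's proof: the same grow/shrink decomposition, the same case split of $\MB(T)$ into $\PC(T)$ handled by $2$-adjacency faithfulness versus pure spouses handled via Assumption~1 with the common child in the conditioning set, and the same shrink-phase argument combining $d$-separation given $\MB(T)$, weak union, and CMC. The ``main obstacle'' you flag is resolved in the paper exactly along the lines you sketch: the auxiliary nodes appearing in $\bm{X}$ and $\bm{Z}$ are parents of the common child $C$ and hence themselves spouses (or children) of $T$ lying in $\MB(T)$, so they are added one at a time --- first via the pair test with $C$ already in $\bm{S}$, then via the singleton test --- which supplies the inductive ordering you anticipated.
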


We provide the proof in Supplementary Material~\ref{sup:proofs}. For discovering the Markov blanket, we do not need to know the collider of a strict $2$-association since it only returns a set of nodes. The more challenging task is to implement our framework to discover causal networks, which we will briefly discuss below.

\subsection{TOWARDS CAUSAL NETWORK INFERENCE}
\label{sec:towards}

In this paper, we mainly focused on answering two questions: 1) How can we weaken faithfulness to detect xor-type structures, and 2) under which conditions can we identify the collider in such a triple? The assumptions that we derived are sufficient for Markov blanket discovery, as we showed above. However, in Markov blanket discovery, we are not concerned with finding a unique graph structure, our only goal is to detect the set of nodes, which contains the parents, children and spouses of a target node.

For causal discovery, assuming $2$-adjacency faithfulness might be too inclusive. To illustrate this statement, consider Example~\ref{ex:one} again, in which $X$ and $Z$ cause $Y$ through a noisy xor. Besides the three possible collider structures $X \to Y \leftarrow Z$, $Y \to X \leftarrow Z$ and $X \to Z \leftarrow Y$, also a fully connected graph, e.g. $X \to Y \leftarrow Z$ and $X \to Z$, is compatible with $2$-adjacency faithfulness since $X$ is not $1$-associated to $Z$ or $Y$. To avoid finding such graphs with superfluous edges, we need to restrict the search space to those graphs $G'$ for which no proper subgraph is compatible with $2$-adjacency faithfulness and $2$-orientation faithfulness. For future work, we want to investigate how this can be achieved. One possibility could be to combine our assumptions with SGS-minimality~\citep{spirtes:00:book}, which assumes that no proper subgraph of the true DAG $G$ entails the causal Markov condition w.r.t. to $P$. In Example~\ref{ex:one}, assuming SGS-minimality in addition to $2$-adjacency faithfulness would reduce the number of admissible graphs to the three possible collider structures, which include the true DAG.

To derive a causal discovery algorithm under the above assumptions, the most straightforward approach would be to further extend the GS algorithm. After detecting all Markov blankets, the GS algorithm distinguishes the spouses of a Markov blanket from the parents and children by detecting collider patterns. In this step, we could extend the existing rule with a modification of Definition~\ref{def:2orientationrule}. Similarly, we could extend well-known algorithms such as the PC algorithm~\citep{spirtes:00:book} or the GES algorithm~\citep{chickering:02:ges} by modifying the skeleton phase, respectively the forward phase such that we can find triple interactions as we did for GS. The edge orientation could be done by first applying the orientation rule in Definition~\ref{def:2orientationrule} and then applying a similar set of rules like Meek's orientation rules~\citep{meek:95:orientation}. Alternatively, it was shown that SAT-based causal discovery algorithms can be easily adapted to weaker assumptions than faithfulness~\citep{zhalama:17:sat}, which might be an interesting direction for future work.

\section{CONCLUSION}
\label{sec:conclusion}

In this work, we proposed $2$-adjacency faithfulness, which is a weaker version of adjacency faithfulness. Our new assumption is able to detect faithfulness violations caused by weak or non-existent marginal dependencies, which are detectable by considering a combination of parents, children or spouses. We provide an in-depth analysis of such dependencies and propose a sound orientation rule, which can infer part of the correct causal structure by detecting colliders. We complement this rule with $2$-orientation faithfulness, which assumes that if a causal graph contains such collider structures, we will find that the corresponding conditional dependence statements hold in $P$. As a proof of concept, we extended the GS algorithm to find Markov blankets under strictly weaker assumptions than faithfulness.

For future work, we would like to develop a sound causal discovery algorithm based on $2$-adjacency faithfulness and extend our theory to directed mixed graphs\iflong that can contain unobserved confounders.\else.\fi

\begin{acknowledgements} 
The authors would like to thank the anonymous reviewers for insightful and valuable comments.
This work was supported by the European Research Council (ERC) under the European Union's Horizon 2020 research and innovation programme (grant agreement 639466).
A. Marx is supported by the International Max Planck Research School for Computer Science (IMPRS-CS).
\end{acknowledgements} 


\bibliography{bib/abbreviations,bib/bib-paper,bib/bib-alex}

\appendix

\section*{SUPPLEMENTARY MATERIAL}

\setcounter{section}{19}

\subsection{Example~2 in Detail}
\label{sup:example}

As described in Section~\ref{sec:orientation}, we can generate a DAG of the form $X \to Y \leftarrow Z$ and $W \to Y$ s.t. $X,Y$ and $Z$ form a minimal unfaithful triple and $W \nIndep_P Y$ as follows. We generate $X, Z, W$ and $E$ independently, with $X$ and $Z$ as fair coins, $W$ as a coin with $P(W = 1) = p$, where $0 < p < 1$ and $E$ (the noise variable) as a biased coin with $P(E = 1) = q$, $0 < q < \frac{1}{2}$. With $q > 0$, we ensure that the function is non-deterministic. Further, we generate $Y$ as
\[
Y := ((X \oplus Z) \land W ) \oplus E \; .
\]
We will obtain that $P(Y = 1) = q + \frac{p}{2} - pq$. Further, we can calculate that $P(X=1 , Y= 1) = \frac{1}{2} P(Y = 1) = P(X=1) \cdot P(Y = 1)$. Also, $P(X=1,Y=0) = P(X=1) \cdot P(Y = 0)$, which means that they are marginally independent. The same holds for $Z$ and $Y$. If we calculate the probability for all three variables, we get that $P(X = 0, Z=1, Y=1) = \frac{p + q - 2pq}{4}$ and $P(X=0,Z=1) \cdot P(Y=1) = \frac{1}{4} P(Y = 1)$. Hence, we need to solve
\begin{align}
P(X = 0, Z=1, Y=1) &= P(X=0,Z=1) \cdot P(Y=1) \\
\Leftrightarrow p + q - 2pq &= q + \frac{p}{2} - pq \\
\Leftrightarrow  p - pq &= \frac{p}{2} \; .
\end{align}
The only solutions are $p=0$ or $q= \frac{1}{2}$, which we excluded. Hence, $Y \nIndep_P \{ X,Z \}$ and by weak union also $Y \nIndep_P X \mid Z$, as well as $Y \nIndep_P Z \mid X$. Since we know by assumption that $X \Indep_P Z$ we can conclude from Lemma~\ref{le:uftand} that also $X \nIndep_P Z \mid Y$, which means that $\{ X,Y,Z \}$ from a minimal unfaithful triple since $W$ will also not cancel out any of these conditional dependencies. Next, we also find that $W \nIndep_P Y$, since $P(W=1,Y=1) = \frac{p}{2}$, which is only equal to $P(W=1) \cdot P(Y=1)$, if $p=0$, $p=1$ or $q= \frac{1}{2}$, which we excluded, and hence $W \nIndep_P Y$. Last, we need to show that $X \nIndep_P W \mid \{Y,Z \}$ and that $Z \nIndep_P W \mid \{X,Y \}$. We can write
\[
P(X,W \mid Y,Z) = \frac{P(X, W, Y,Z)}{P(Y,Z)} \; .
\]
To show conditional dependence, this value has to be different from $P(X \mid Y,Z) \cdot P(W \mid Y,Z)$. Consider the case where all variables are equal to one.
Hence, we get that
\begin{align}
P(X=1, W=1, Y=1,Z=1) &= \frac{pq}{4} \; , \\
P(X=1, Y=1,Z=1) &= \frac{q}{4} \; , \\
P(W=1, Y=1,Z=1) &= \frac{p}{4} \; .
\end{align}
Since we know that $P(Y=1,Z=1) = P(Y=1) / 2$, we thus need to solve
\[
pq = \frac{pq}{2 P(Y=1)} \; .
\]
This equation can only be true if $p$ or $q$ = 0, i.e. the system is either independent of $W$ or deterministic, $p=1$ or $q = \frac{1}{2}$, which we all excluded by assumption. Hence, $X \nIndep_P W \mid \{Y,Z \}$. The dependence between $Z$ and $W$ given $X$ and $Y$ can be derived in the same way.

\subsection{$2$-Orientation Faithfulness and Sparsest Markov Representation}
\label{sup:smr}

In this section, we briefly discuss the connection of our new assumptions to approaches based on the sparsest Markov representation (SMR)~\citep{raskutti:18:permutation} which is also referred to as frugality~\citep{forster:17:frugality}, which we discussed in the related work section. A graph $G^*$ satisfies the SMR assumption if every graph $G$ that fulfils the Markov property and is not in the Markov equivalence class of $G^*$ contains more edges than $G^*$. Here we will not discuss 
the SMR assumption in further detail, but focus on the suggested 
\iflong
\else
permutation-based
\fi
causal discovery algorithm under the SMR assumption, which is called the Sparsest Permutation (SP) algorithm.

To explain the SP algorithm, we need to define a DAG $G_\pi$, w.r.t. a permutation $\pi$. A DAG $G_\pi$ consists of vertices $\variables$ and directed edges $E_\pi$, where an edge from the $j$-th node $\pi(j)$ according to permutation $\pi$ to node $\pi(k)$ is in $E_\pi$ if and only if $j < k$ and
\begin{equation}
X_{\pi(j)} \nIndep_P X_{\pi(k)} \mid \{ X_{\pi(1)}, X_{\pi(2)}, \dots, X_{\pi(k-1)} \} \backslash \{ X_{\pi(j)} \} \; ,
\end{equation}
where $X_{\pi(j)}$ refers to the $j$-th random variable according to permutation $\pi$. Based on this definition, the SP algorithm constructs a graph $G_\pi$ for each possible permutation and selects that permutation $\pi^*$ for which $G_{\pi^*}$ contains the fewest edges. This permutation $\pi^*$ is also called minimal or a minimal permutation, if it is not unique. 

Although this procedure might be very slow in practice, it has theoretically appealing properties. In particular, we conjecture that it can identify the collider pattern even if strict $2$-associations are included, if $2$-orientation faithfulness holds. In this work, we will not provide a proof for this conjecture, but give some evidence by discussing the behaviour of the SP algorithm on an example graph.

Consider the graph provided in Figure~\ref{fig:2-orientation-ff}(a) again. For this example, we assume that \variables does not consist of any further vertices than the four shown in the graph. We will show that all permutations $\pi$ that are minimal have in common that $\pi(4) = Y$. W.l.o.g. let $\pi(1) = X, \pi(2) = Z$ and $\pi(3) = W$, then $G_\pi$ only contains the three correct edges, which are:
\begin{align}
\pi(1) &\to \pi(4): X \nIndep_P Y \mid \{ Z,W \} \\
\pi(2) &\to \pi(4): Z \nIndep_P Y \mid \{ X,W \} \\
\pi(3) &\to \pi(4): W \nIndep_P Y \mid \{ X,Z \} \\
\end{align}
and we do not add any superfluous edges, as
\begin{align}
\pi(1) &\to \pi(2): X \Indep_P Z \mid \emptyset \\
\pi(1) &\to \pi(3): X \Indep_P W \mid Z \\
\pi(2) &\to \pi(3): Z \Indep_P W \mid X \; . \\
\end{align}
If we would pick a permutation $\pi'$ in which we flip for example $W$ and $Y$ such that $Y$ is no longer the node assigned to the highest number in the permutation, i.e. $\pi'(3) = Y$ and $\pi'(4) = W$, we will find more edges and thus not a minimal graph anymore. In particular, we get that 
\begin{align}
\pi'(1) &\to \pi'(3): X \nIndep_P Y \mid \{ Z \} \\
\pi'(2) &\to \pi'(3): Z \nIndep_P Y \mid \{ X \} \\
\pi'(3) &\to \pi'(4): Y \nIndep_P W \mid \{ X,Z \} \\
\pi'(1) &\to \pi'(4): X \nIndep_P W \mid \{ Z,Y \} \\
\pi'(2) &\to \pi'(4): Z \nIndep_P W \mid \{ X,Y \} \\
\end{align}
and thus the graph according to this permutation contains two edges more than for permutation $\pi$. The main point is that we are now allowed to condition on $Y$, which opens the paths between $X$ or $Z$ and $W$. Similarly, assume that we put $X$ as the last node and get the order $\pi'(1) = Z, \pi'(2) = W, \pi'(3)= Y$ and $\pi'(4) = X$, for which
\begin{align}
\pi'(1) &\to \pi'(2): Z \Indep_P W \mid \emptyset \\
\pi'(1) &\to \pi'(3): Z \Indep_P Y \mid \{ W \} \\
\pi'(1) &\to \pi'(4): Z \nIndep_P X \mid \{ W,Y \} \\
\pi'(2) &\to \pi'(3): W \nIndep_P Y \mid \{ Z \} \\
\pi'(2) &\to \pi'(4): W \nIndep_P X \mid \{ Z,Y \} \\
\pi'(3) &\to \pi'(4): Y \nIndep_P X \mid \{ Z,W \} \\
\end{align}
and hence, we again find four edges, which is one more than for $\pi$. Also, if $\pi'(1) = Y$, we can use it in the conditional to find a dependence between $X$ and $Z$ and at least one dependence between $X$ or $Z$ and $W$. 
\iflong
Hence, at least for this example graph, the SP algorithm would infer a correct ordering.
\else
Hence, the SP algorithm would infer a correct ordering for this graph.
\fi

An interesting avenue for future work would be to analyze whether it is possible to always detect the collider pattern also in larger graphs and triples that 
\iflong
may or may not be shielded.
\else
may be shielded.
\fi

\subsection{Proofs}
\label{sup:proofs}

\iflong
\else
Before we provide the proofs, we state the graphoid axioms~\citep{dawid:79:graphoid,spohn:80:stochastic,geiger:90:identifying}, which are used in several of our proofs.

\begin{definition}[Graphoid Axioms]
Let $\Models = (G, \variables, P)$, with $\bm{W}, \bm{X}, \bm{Y}, \bm{Z} \subseteq \variables$. The (semi-)graphoid axioms are the following rules ($\Indep$ denotes $\Indep_P$ and $\Indep_G$)
\begin{enumerate}
	\item Symmetry: $\bm{X} \Indep \bm{Y} \mid \bm{Z} \Rightarrow \bm{Y} \Indep \bm{X} \mid \bm{Z}$.
	\item Decomposition: $\bm{X} \Indep \bm{Y} \cup \bm{W} \mid \bm{Z} \Rightarrow \bm{X} \Indep \bm{Y} \mid \bm{Z}$.
	\item Weak Union: $\bm{X} \Indep \bm{Y} \cup \bm{W} \mid \bm{Z} \Rightarrow \bm{X} \Indep \bm{Y} \mid \bm{W} \cup \bm{Z}$.
	\item Contraction: $(\bm{X} \Indep \bm{Y} \mid \bm{W} \cup \bm{Z}) \land (\bm{X} \Indep \bm{W} \mid \bm{Z}) \Rightarrow \bm{X} \Indep \bm{Y} \cup \bm{W} \mid \bm{Z}$.
\end{enumerate}
 For separations only on the graph, the graphoid axioms include two additional rules (only for $\Indep_G$).
\begin{enumerate}
	\setcounter{enumi}{4}
	\item Intersection: $(\bm{X} \Indep \bm{Y} \mid \bm{W} \cup \bm{Z}) \land (\bm{X} \Indep \bm{W} \mid \bm{Y} \cup \bm{Z}) \Rightarrow \bm{X} \Indep \bm{Y} \cup \bm{W} \mid \bm{Z}$, for any pairwise disjoint subsets $\bm{W}, \bm{X}, \bm{Y}, \bm{Z} \subseteq \variables$.
	\item Composition: $(\bm{X} \Indep \bm{Y} \mid \bm{Z}) \land (\bm{X} \Indep \bm{W} \mid \bm{Z}) \Rightarrow \bm{X} \Indep \bm{Y} \cup \bm{W} \mid \bm{Z}$.
\end{enumerate}
\end{definition}

As an illustration why certain rules only hold for graphs and not generally for probability distributions, consider rule (6) and Figure~\ref{fig:adjacency-failure}(a) again. From the distribution induced by the xor, we find that $Y \Indep_P X$ and $Y \Indep_P Z$ but we cannot conclude that $Y \Indep_P \{ X,Z \}$. If, however, in a graph $Y$ is $d$-separated from $X$ and from $Z$ then $Y$ is $d$-separated from the set $\{ X,Z \}$.

\leuftand*
\begin{proof}
Assume that w.l.o.g.~$X \nIndep_P \{ Y, Z \}$ is violated. By weak union, we get $X \Indep_P Y \mid Z$ which is equivalent to $Y \Indep_P X \mid Z$, using symmetry. We know that $Y \Indep_P Z$. By contraction, we get that $Y \Indep_P \{ X, Z \}$. Similarly, we conclude that $Z \Indep_P \{ X, Y \}$. Altogether, this implies that $X,Y,Z$ would be independent, which is a contradiction.

Each pair of joint dependence and marginal independence, e.g.~$X \nIndep_P \{ Y, Z \}$ and $X \Indep_P Z$, implies a conditional dependence, e.g.~$X \nIndep_P Y \mid Z$, by contraction.
\end{proof}

\leconnected*
\begin{proof}
Assume w.l.o.g. that $X$ is $d$-separated from $Y$ and $Z$ in $G$---i.e. $X \Indep_G Y$ and $X \Indep_G Z$. By applying the composition axiom, we get that $X \Indep_G \{ Y, Z \}$. If we apply the causal Markov condition, we get that $X \Indep_P \{ Y, Z \}$, which is a contradiction to our assumption.
\end{proof}
\fi

\thcollider*
\begin{proof}
There must be (at least) one node in $\{ X,Y,Z \}$ that is not an ancestor of any of the other nodes, say $Z \not \in \An(X)$ and $Z \not \in \An(Y)$, because of acyclicity. In other words, $X \not \in \De(Z)$ and $Y \not \in \De(Z)$. The local Markov property states that
\iflong
\[
Z \Indep_G \Nd(Z) \mid \Pa(Z)
\]
\else
$Z \Indep_G \Nd(Z) \mid \Pa(Z)$
\fi
and hence in particular
\[
Z \Indep_G \{ X,Y \} \mid \Pa(Z) \, .
\]
Further, if $| \Pa(Z) \cap \{X,Y \} | < 2$, we get a contradiction with the assumed conditional dependences. Hence $\{ X,Y \} \subseteq \Pa(Z)$ and $X \to Z \leftarrow Y$ is in $G$.
\end{proof}

\thsoundness*
\begin{proof}
First, we derive a general statement about the relations between $\bm{X}$ and $\bm{Z}$ without further specifying the role of $Y$. In particular, we show that there always exists a pair $(X,Z) \in \bm{X} \times \bm{Z}$ s.t. w.l.o.g.
\begin{equation}
\label{eq:pair:exists}
X \Indep_G Z \mid \Pa(X) \cup (\bm{X} \backslash \{ X \}) \cup (\bm{Z} \backslash \{ Z \}) \; ,
\end{equation}
where $\Pa(X) \subseteq \variables \backslash \bm{Z}$. 
Due to acyclicity, there has to exist a node in $\bm{X} \cup \bm{Z}$, say $X$, that is not an ancestor of any node in $(\bm{X} \cup \bm{Z}) \backslash \{ X \}$ and hence $(\bm{X} \cup \bm{Z}) \backslash \{ X \} \subseteq \Nd(X)$. 
By the local Markov condition, we get that $X \Indep_G (\bm{X} \cup \bm{Z}) \backslash \{ X \} \mid \Pa(X)$. 
Thus, by weak union,
$$X \Indep_G Z \mid \Pa(X) \cup (\bm{X} \backslash \{ X \}) \cup (\bm{Z} \backslash \{ Z \}) \; ,$$
for any $Z \in \bm{Z}$. Further, $\bm{Z} \cap \Pa(X) = \emptyset$, as by assumption no pair of nodes $(X,Z) \in \bm{X} \times \bm{Z}$ is adjacent in $G$.

Since $Y \aletwos \bm{X}$ and $Y \aletwos \bm{Z}$, we know that $Y$ is at least adjacent to one node in $\bm{X}$ and one node in $\bm{Z}$. Hence, $Y$ can take the following roles:
\begin{enumerate}[label=\alph*)]
	\item $Y$ is a descendent of each node in $\bm{X} \cup \bm{Z}$ (which corresponds to $\bm{X} \to Y \leftarrow \bm{Z}$),
	\item $Y$ is a non-descendent of each node in $\bm{X} \cup \bm{Z}$ and
	\item $Y$ is a descendent of at least one node in $\bm{X} \cup \bm{Z}$ and a non-descendent of at least one node in $\bm{X} \cup \bm{Z}$.
\end{enumerate}
The first statement corresponds to the graph structure implied by rule i) and any possible structure from the latter two is implied by the probabilities found in rule ii). To show these two implications hold, we do a proof by contraposition for each rule.

Hence, to show rule i), we need to prove that if the graph structure is not a collider---i.e.~$Y$ takes one of the roles described in b)~or c)---then there exists a pair $(X,Z) \in \bm{X}\times \bm{Z}$ and there exists a subset $\bm{S} \subseteq \variables \backslash \{ X,Z \}$ s.t.
\[
X \Indep_P Z \mid \bm{S} \cup \{ Y \} \cup (\bm{X} \backslash \{X \}) \cup (\bm{Z} \backslash \{Z \}) \; .
\]
First, consider all graphs in which $Y$ is a non-descendent of each node in $\bm{X} \cup \bm{Z}$ as described in b) We know from statement~(\refeq{eq:pair:exists}) that, w.l.o.g., there exists a pair $(X,Z) \in \bm{X}\times \bm{Z}$ for which $X \Indep_G Z \mid \Pa(X) \cup (\bm{X} \backslash \{ X \}) \cup (\bm{Z} \backslash \{ Z \})$. Since $Y \in \Nd(X)$, we will also find that $X \Indep_G Z \mid \Pa(X) \cup (\bm{X} \backslash \{ X \}) \cup (\bm{Z} \backslash \{ Z \}) \cup \{ Y \}$, where $\Pa(X)$ does not include $X$ or $Z$. Thus, by CMC we found the required independence. For the cases described in c), again assume that $X$ is not an ancestor of any node in $(\bm{X} \cup \bm{Z}) \backslash \{ X \}$. To conclude the same statement as previously, we show that $X$ has to be in $\De(Y)$ and thus $Y \in \Nd(X)$. We do this by deriving a contradiction: assume $X \in \Nd(Y)$. If $\bm{X}$ consists only of the single node $X$, then $X$ has to be adjacent to $Y$, $X \in \Pa(Y)$ and hence $X \to Y$ in $G$. Thus, $Y$ (and hence $X$) has to be an ancestor of at least one node in $\bm{Z}$, by assumption ($Y$ is a non-descendent of at least one node in $\bm{X} \cup \bm{Z}$), 
which is a contradiction. Similarly, if $\bm{X}$ contains a second node, $X'$, we know by assumption that $X' \in \Nd(X)$. We also know that the triple $\{ X,X',Y \}$ has to contain a collider. $X$ cannot be the collider, since $X \not \in \De(Y)$ and also $X'$ cannot be the collider since $X \not \in \An(X')$. Hence, $Y$ has to be the collider on the path $\langle X, Y, X' \rangle$. As above, at least one node $Z \in \bm{Z}$ has to be a descendent of $Y$, by assumption
and thus, $X \in \An(Z)$, which is a contradiction.

Last, we prove that the implication in rule ii) holds. Thus, by contraposition, we need to show that if $\bm{X} \to Y \leftarrow \bm{Z}$, then there exists a pair $X,Z \in \bm{X}\times \bm{Z}$ s.t. $X$ is conditionally independent of $Z$ given a subset of $\variables \backslash \{ X,Z \}$ that contains $(\bm{X} \backslash \{X \}) \cup (\bm{Z} \backslash \{Z \})$ but does not contain $Y$. From statement~(\refeq{eq:pair:exists}) there exists a pair $(X,Z) \in \bm{X} \times \bm{Z}$ that is $d$-separated given $\Pa(X) \cup (\bm{X} \backslash \{ X \}) \cup (\bm{Z} \backslash \{ Z \})$. Since $Y$ cannot be in $\Pa(X)$ due to acyclicity, we showed that there exists such a pair of nodes $X,Z$ that can be rendered conditionally independent by a subset of $\variables \backslash \{ X,Z \}$ that contains $(\bm{X} \backslash \{X \}) \cup (\bm{Z} \backslash \{Z \})$ but does not contain $Y$
\iflong
(after applying CMC), which concludes the proof.
\else
(after applying CMC).
\fi
\end{proof}

\cordetection*
\begin{proof}
Since we know that $Y \aletwos \bm{X}$ and $Y \aletwos \bm{Z}$, we can conclude that, as in the proof of Theorem~\ref{th:soundness}, $Y$ can take three different roles w.r.t. $\bm{X}$ and $\bm{Z}$, where role a) corresponds to condition i) in $2$-orientation faithfulness and rule i) in the orientation rule and roles b)~and c)~correspond to condition ii) and rule ii). 

Now assume that condition i) in $2$-orientation faithfulness fails, that is, the true graph can be described by role a), but there exists a pair $X \in \bm{X}$ and $Z \in \bm{Z}$, for which $X$ is independent of $Z$ given a subset of $\variables \backslash \{ X,Z \}$ that contains ${Y} \cup (\bm{X} \backslash \{X \}) \cup (\bm{Z} \backslash \{Z \})$. If this is the case, we cannot apply rule i) of our orientation rule. In addition, we showed in Theorem~\ref{th:soundness} that for a graph as described by a) rule ii) can never apply. Thus, we can detect this failure of condition i) in $2$-orientation faithfulness by noticing that neither rule i) nor ii) of our orientation rule applies.

Next, assume condition ii) in $2$-orientation fails. This means that we cannot apply rule ii) of the orientation rule. Again, we showed that for such graphs $Y$ takes either role b)~or c), in which case orientation rule i) can never apply. Hence, we can detect if condition ii) in $2$-orientation faithfulness fails, since none of the conditions in the orientation rule is met.
\end{proof}

\thcorrectness*
\begin{proof}
We follow the original correctness proof under the faithfulness assumption~\citep{margaritis:00:gs}, that consists of two main steps. First, we need to show that $\MB(T) \subseteq \bm{S}$ after the grow phase and second, we need to ensure that all nodes in $\MB(T)$ stay in $\bm{S}$ during the shrink phase, while nodes not in $\MB(T)$ will be removed from $\bm{S}$ in the shrink phase.

Grow phase: By assumption ($2$-adjacency faithfulness), for each node $X \in \PC(T)$, $T$ is either $1$-associated to $X$, or there exists a set $\bm{X}$ that includes $X$ such that $T \atwos \bm{X}$. If $T$ is $1$-associated to a node $X$, then $T \nIndep_P X \mid \bm{S}$, if $X \not \in \bm{S}$, hence we will add those nodes. If $T$ is strictly $2$-associated to a set $\{ X,Z \}$ then $T \nIndep_P X \mid \bm{S} \cup \{ Z \}$ for all $\bm{S} \subseteq \variables \backslash \{ X,T,Z \}$. Thus, we also add $X$ to $\bm{S}$, if $X \not \in \bm{S}$ and afterwards also find that $T \nIndep_P Z \mid \bm{S}$, if $Z \not \in \bm{S}$, since $X \in \bm{S}$. Hence, all nodes in $\PC(T)$ will be added during the grow phase. Next, we need to consider the spouses of $T$ that do not overlap with $\PC(T)$, hence might not have been added yet.\!\footnote{There could be nodes that are spouses of $T$ and in $\PC(T)$ at the same time e.g. if $T$ has two children $X$ and $Z$, where $Z$ is also a parent of $X$.} Since we know that eventually $\bm{S}$ will contain all children of $T$, we will afterwards also add the corresponding spouses. In particular, we need to consider two classes of spouses $S$: 1) Spouses that through a child node $C$ are strictly $2$-associated to $T$ ($T \atwos \{C,S \}$). Those will be added due to the strict $2$-association as explained above. 2) Spouses that are not involved in such a strict $2$-association. For the latter, we find a conditional dependence between $T$ and $S$ by conditioning on the corresponding child node $C$ (by Assumption~\ref{as:one}), which will be in $\bm{S}$. A special case occurs if a child node $C$ is strictly $2$-associated to two spouses $S_1$ and $S_2$. Due to Assumption~\ref{as:one}, $T$ is dependent on $S_1$ if we condition on $C$ and $S_2$, vice versa $T$ is dependent on $S_2$ if we condition on $C$ and $S_1$. Similarly to how we add strict $2$-associations above, we will also first add one of the two and then the second one.
Thus, after the grow phase, $\bm{S}$ will contain all elements of $\MB(T)$.

Shrink phase: Since it is possible that after the grow phase $\bm{S}$ is a superset of $\MB(T)$, we need to ensure that in the shrink phase all $W \not \in \MB(T)$ will be deleted from $\bm{S}$ and all $X \in \MB(T)$ will stay in $\bm{S}$.

First, we show that no node $X \in \MB(T)$ will be removed from $\bm{S}$.
Assume $X$ is the first element in $\MB(T)$ that we attempt to remove from $\bm{S}$. If $X \in \PC(T)$, by definition of $2$-adjacency faithfulness $T$ is either $1$-associated to $X$ and hence, $X$ will not be removed, or $T$ is strictly $2$-associated to a set $\bm{X} \subseteq \MB(T)$ that contains $X$. W.l.o.g. let $\bm{X} = \{ X,Z \}$, then $T \nIndep_P X \mid \bm{S} \backslash \{ X \}$, since $\bm{S}$ contains $Z$, and hence, $X$ will not be removed from $\bm{S}$.
If $X$ is a spouse of $T$, there again exist two cases. Either $T$ is strictly $2$-associated to a set that contains $X$, in which case, $X$ will not be removed from $\bm{S}$ as explained above, or $T$ is not strictly $2$-associated to a set that contains $X$. In the latter case, by Assumption~\ref{as:one}, $X$ is dependent on $T$ conditioned on a subset of $\MB(T) \backslash \{ X \}$ and thus $X \nIndep_P T \mid \bm{S} \backslash \{ X \}$. In particular, this subset consists of the common child $C$ and in the special case that $C$ is strictly $2$-associated to $X$ and a second spouse $S$, it also contains that second spouse $S$. Either way, those conditioning sets are contained in $\bm{S}$. Hence, $X$ will not be removed from $\bm{S}$. In the following iterations, $\bm{S}$ will still contain $\MB(T)$ and hence, we will also not remove a true element of $\MB(T)$.

Last, assume $W \not \in \MB(T)$, but $W \in \bm{S}$ after the grow phase. Further, we can write $\bm{S} \backslash \{ W \}$ as $\MB(T) \cup \bm{Q}$, where $\bm{Q}$ contains all elements from $\bm{S} \backslash \{ W \}$ that are not in $\MB(T)$. Then, 
$T \Indep_G \{ W \} \cup \bm{Q} \mid \MB(T)$ and thus by weak union, $T \Indep_G W \mid \MB(T) \cup \bm{Q}$, which implies $T \Indep_P W \mid \bm{S} \backslash \{ W \}$ (by CMC). Hence, we delete each node in $\bm{S}$ that is not in $\MB(T)$ in the shrink phase.
\end{proof}

\end{document}